\DeclareMathAlphabet\EuRoman{U}{eur}{m}{n}
\SetMathAlphabet\EuRoman{bold}{U}{eur}{b}{n}
\crefname{lemma}{Lemma}{Lemmas}
\crefname{corollary}{Corollary}{Corollaries}
\crefname{theorem}{Theorem}{Theorems}
\let\reftagform@=\tagform@
\def\tagform@#1{\maketag@@@{\ignorespaces\textcolor{gray}{(#1)}\unskip\@@italiccorr}}
\renewcommand{\eqref}[1]{\textup{\reftagform@{\ref{#1}}}}
\declaretheorem[style=plain,numberwithin=section,name=Theorem]{theorem}
\declaretheorem[style=plain,sibling=theorem,name=Lemma]{lemma}
\declaretheorem[style=plain,sibling=theorem,name=Proposition]{proposition}
\declaretheorem[style=plain,numberwithin=section,name=Claim]{claim}
\declaretheorem[style=definition,sibling=theorem,name=Definition]{definition}
\declaretheorem[style=remark,qed=$\triangleleft$,sibling=theorem,name=Remark]{remark}
\numberwithin{theorem}{section}
\newcommand{\EE}{\mathbb{E}}
\newcommand{\NN}{\mathbb{N}}
\newcommand{\OO}{\mathbb{O}}
\newcommand{\PP}{\mathbb{P}}
\newcommand{\RR}{\mathbb{R}}
\newcommand{\Aa}{\mathcal{A}}
\newcommand{\Ff}{\mathcal{F}}
\newcommand{\Gg}{\mathcal{G}}
\newcommand{\Hh}{\mathcal{H}}
\newcommand{\Ii}{\mathcal{I}}
\newcommand{\Ss}{\mathcal{S}}
\newcommand{\Xx}{\mathcal{X}}
\newcommand{\Yy}{\mathcal{Y}}
\newcommand{\one}{\mathbf{1}}
\def\[#1\]{\begin{equation}\begin{aligned}#1\end{aligned}\end{equation}}
\def\*[#1\]{\begin{align*}#1\end{align*}}
\def\s[#1\s]{\small\begin{equation}\begin{aligned}#1\end{aligned}\end{equation}\normalsize}
\def\s*[#1\s]{\small\begin{align*}#1\end{align*}\normalsize}
\def\ss[#1\ss]{\scriptsize\begin{equation}\begin{aligned}#1\end{aligned}\end{equation}\normalsize}
\def\ss*[#1\ss]{\scriptsize\begin{align*}#1\end{align*}\normalsize}
\def\clap#1{\hbox to 0pt{\hss#1\hss}}
\newcommand{\lcrx}[4][{-1}]{ 
	\IfEq{#1}{-1}{\left #2 {{{{#3}}}} \right #4}{
	\IfEq{#1}{0}{#2 {{{{#3}}}} #4}{
	\IfEq{#1}{1}{\bigl #2 {{{{#3}}}} \bigr #4}{
	\IfEq{#1}{2}{\Bigl #2 {{{{#3}}}} \Bigr #4}{
	\IfEq{#1}{3}{\biggl #2 {{{{#3}}}} \biggr #4}{
	\IfEq{#1}{4}{\Biggl #2 {{{{#3}}}} \Biggr #4}{
    \GenericWarning{"4th argument to lcrx must be -1, 0, 1, 2, 3, or 4"}
    }}}}}}} %
\newcommand{\stk}[2]{\ensuremath{\stackrel{\text{#2}}{#1}}}
\newcommand{\ol}{\overline}
\newcommand{\subalign}[1]{%
  \vcenter{%
    \Let@ \restore@math@cr \default@tag
    \baselineskip\fontdimen10 \scriptfont\tw@
    \advance\baselineskip\fontdimen12 \scriptfont\tw@
    \lineskip\thr@@\fontdimen8 \scriptfont\thr@@
    \lineskiplimit\lineskip
    \ialign{\hfil$\m@th\scriptstyle##$&$\m@th\scriptstyle{}##$\crcr
      #1\crcr
    }%
  }
}
\renewcommand{\Pr}{\mathbb{P}} %
\def\EE{\mathbb{E}} %
\newcommand{\given}{\mid}
\newcommand{\dist}{\sim}
\newcommand{\distiid}{\stk{\dist}{iid}}
\newcommand{\distind}{\stk{\dist}{ind}}
\newcommand{\andT}{\ \text{and}\ }
\newcommand{\ind}[1]{\one_{#1}} %
\def\multiset#1#2{\ensuremath{\left(\kern-.3em\left(\genfrac{}{}{0pt}{}{#1}{#2}\right)\kern-.3em\right)}}
\newcommand{\tr}{\mathrm{Tr}} %
\newcommand{\IMat}[1][{-1}]{\IfEq{#1}{-1}{I}{I_{#1}}}
\newcommand{\dee}{\mathrm{d}} %
\newcommand{\eqdist}{\stk={d}}
\DeclareMathOperator{\normaldist}{N}
\DeclareMathOperator{\unifdist}{Unif}
\newcommand{\rbra}[2][{-1}]{\lcrx[#1] ( {#2} ) }
\newcommand{\sbra}[2][{-1}]{\lcrx[#1] [ {#2} ] }
\newcommand{\abs}[2][{-1}]{\lcrx[#1] \vert {#2} \vert }
\newcommand{\set}[2][{-1}]{\lcrx[#1] \{ {#2} \}}
\newcommand{\ceil}[2][{-1}]{\lcrx[#1] \lceil {#2} \rceil}
\newcommand{\norm}[2][{-1}]{\lcrx[#1] \Vert {#2} \Vert}
\newcommand{\card}[2][{-1}]{\lcrx[#1] \vert {#2} \vert }
\newcommand{\Nats}{\NN}
\newcommand{\Reals}{\RR}
\newcommand{\PosReals}{\Reals_+}
\newcommand{\ProbMeasures}{\mathcal{M}_1}
\newcommand{\NNReals}{\PosReals}
\newcommand{\rangeO}[2][{-1}]{
	\IfEq{#1}{-1}{\set{0,\dots, #2}}{\set{#1,\dots, #2}}}
\newcommand{\range}[1]{[#1]}
\newcommand{\Union}{\bigcup}
\newcommand{\OrthProjs}{\OO\PP}
\newcommand{\cF}{\mathcal F}
\newcommand{\features}{\Reals^{\idim}}
\newcommand{\labels}{K}
\newcommand{\loss}{\ell}
\newcommand{\nats}{\mathbb{N}}
\newcommand{\Dist}{\mathcal D}
\newcommand\optparen[1]{\ifthenelse{\equal{#1}{}}{}{(#1)}}
\newcommand{\RiskChar}{L}
\newcommand{\Risk}[2]{\RiskChar_{#1}\optparen{#2}}
\newcommand{\EmpRisk}[2]{\RiskChar_{#1}\optparen{#2}}
\newcommand{\dataspace}{Z}
\newcommand{\HS}{\mathcal H}
\newcommand{\defn}[1]{\emph{#1}}
\renewcommand{\ProbMeasures}[1]{\mathcal M_1(#1)}
\newcommand{\GibbsC}{\ProbMeasures{\HS}}
\renewcommand{\dataspace}{\mathcal S}
\newcommand{\cPr}[1]{\Pr^{#1}}
\newcommand{\hh}{\hat h}
\newcommand{\sGClong}{structural Glivenko--Cantelli}
\newcommand{\SGC}{SGC}
\newcommand{\SGCadj}{Structural}
\newcommand{\sGCadj}{structural}
\apptocmd{\sloppy}{\hbadness 10000\relax}{}{}
\begin{document}

\title[In Defense of Uniform Convergence: Generalization via derandomization]{In Defense of Uniform Convergence: \\Generalization via derandomization with an application to interpolating predictors}

\author{Jeffrey Negrea}
\address{Dept.\ of Statistical Sciences, Univ.\ of Toronto; Vector Institute}
\author{Gintare Karolina Dziugaite}
\address{Element AI}
\author{Daniel M. Roy}
\address{Dept.\ of Statistical Sciences, Univ.\ of Toronto; Vector Institute}

\begin{abstract}
We propose to study the generalization error of a learned predictor $\hh$ in terms of that of a surrogate (potentially randomized) predictor that is coupled to $\hh$ and designed to trade empirical risk for control of generalization error.
In the case where $\hh$ interpolates the data,
it is interesting to consider theoretical surrogate classifiers that are partially derandomized or rerandomized, e.g., fit to the training data but with modified label noise.
We also show that replacing $\hh$ by its conditional distribution with respect to an arbitrary $\sigma$-field is a convenient way to derandomize.
We study two examples, inspired by the work of \citet{NK19c} and \citet{bartlett2019benign},
where the learned classifier $\hh$ interpolates the training data with high probability, has small risk, and, yet, does not belong to a nonrandom class with a tight uniform bound on two-sided generalization error.
At the same time, we bound the risk of $\hh$ in terms of surrogates constructed by conditioning and denoising, respectively, and shown to belong to nonrandom classes with uniformly small generalization error.
\end{abstract}

\maketitle

\renewcommand*{\thefootnote}{\fnsymbol{footnote}}
\footnotetext{We would like to thank Alexander Tsigler for pointing out an error in the statement of Lemma 5.3 and Theorem 5.4, caused by a missing hypothesis in one of our technical lemmas.
In these results,
we establish that the surrogate predictor, implicitly studied by \citet{bartlett2019benign} in the setting of overparameterized linear regression, belongs to a structural Glivenko--Cantelli (GC) class. We originally claimed that this was the case under conditions on the sequence of covariance matrices slightly weaker than the ``benign'' conditions introduced by \citeauthor{bartlett2019benign}. We also presented a new risk bound, under these slightly weaker conditions. 
In order to resolve the error, and given that much related work 
had appeared since our work first appeared, we chose a straightforward fix: 
we now establish the structural GC property under the same benign condition of \citeauthor{bartlett2019benign}. Using also the same results on sample covariance matrices, our expected risk bound now essentially matches the high probability bound of  \citeauthor{bartlett2019benign}, although our result is still proven by arguing about (structural) uniform convergence.
}
\renewcommand*{\thefootnote}{\arabic{footnote}}

\section{Introduction}
One of the central problems in learning theory is to explain the statistical performance of deep learning algorithms.
There is particular interest in explaining how overparameterized neural networks, trained by simple variants of stochastic gradient descent (SGD), simultaneously achieve low risk and zero empirical risk on benchmark datasets.
While certain naive explanations have been ruled out \citep{Rethinking17}, progress has been slow.

The bulk of recent work on this problem implicitly assumes the classifier learned by SGD belongs to a (potentially data-dependent) class for which there is a uniform and tight (two-sided) bound on the generalization error \citep{bartlett2017spectrally,neyshabur2017pac,golowich2017size,long2020size,wei2019data}.
After raising this observation,
\citet{NK19c} argue that this approach may be %
unable to explain performance observed in overparameterized models.
They argue this point by constructing a simple problem where an SGD-like algorithm learns a classifier that achieves low risk and zero empirical risk,
yet the learned classifier does not belong (even with high probability) to a class whose generalization error is uniformly small.

In this work, we initiate a response to \citet{NK19c} in defense of the utility of uniform convergence for understanding learning algorithms that obtain zero empirical risk.
We will use the term ``interpolating'' to refer to such learning algorithms and the corresponding learned hypotheses, borrowing the terminology used for functions that achieve zero mean squared error.
In problems involving interpolation, the complexity of the task (e.g., model complexity) often must increase with the dataset size in order for interpolation to be possible. This mirrors deep learning practice, where scientists will train larger, more complex models when presented with a larger dataset.
Since the complexity of the learning problems in question---and possibly even the sample spaces generating the data---change with the sample size, the traditional notions of uniform convergence (Glivenko--Cantelli classes) are not applicable.
Therefore we need to extend the concept of uniform convergence to the setting of sequences of learning problems of increasing complexity, which we do in \cref{sec:SGC} by defining the \emph{\sGClong{}} property.
Then in \cref{sec:decomp} we introduce a general approach to relate sequences of learning problems which are not \sGClong{} to ones that may be.
The basic idea is to introduce a surrogate learning algorithm that closely mimics the learning algorithm of interest, yet whose output belongs to a class (the surrogate hypothesis class) for which a uniform and vanishing bound on two-sided generalization error holds.

The observations of \citet{NK19c} relate to a number of other empirical learning phenomena that demand explanation.
One example is the phenomenon of double descent, brought to light by \citet{belkin2019reconciling,advani2017high,geiger2019jamming}.
The difficulty of explaining these double descent curves using standard uniform convergence arguments is a central theme of recent talks by Belkin. In a line of work by \citet{hastie2019surprises,mei2019generalization},
double descent was observed in unregularized, overparameterized linear regression.
\citet{bartlett2019benign} show that, for sequences of overparameterized linear regression tasks, the minimum norm interpolating solution to least squares will achieve asymptotically optimal risk with high probability given constraints on the covariate (feature) distribution.
In this setting, we show that no class containing the learned hypothesis with high probability can have a vanishing uniform bound on the absolute generalization error. In fact, such a bound cannot be representative of the risk of the learned hypothesis.
In \cref{sec:LinearRegression}, we show that the analysis of \citet{bartlett2019benign} may be viewed as introducing a surrogate predictor.
The surrogate in this case is the minimum-norm interpolating solution on the training data \emph{with label noise removed}.
To establish that the surrogate predictor belongs to a class with the \sGClong{} property, we rely on the concentration results for empirical covariance matrices by \citet{koltchinskii2014concentration}.
We combine a uniform bound based on the \sGClong{} property with other components of the analysis of \citet{bartlett2019benign} to obtain similar bounds on the expected risk of the minimum norm interpolating solution.

In \cref{sec:surr-cond}, we provide a relatively flexible recipe for constructing surrogate classifiers via probabilistic conditioning.
The approach produces a probability measure over hypotheses via retraining on data that is equal in distribution to the original training data but has been partially ``rerandomized''.
The approach effectively trades empirical risk for generalization error.
Lastly, in \cref{sec:hypercube}, we apply this recipe to an example, inspired by \citet{NK19c}, where an interpolating learning algorithm is constructed for which there is no \sGClong{} class containing the learned hypothesis with high probability.
In that example, we construct a surrogate by conditioning with respect to a specific $\sigma$-field.
We show the corresponding surrogate class is \sGClong{} and can be used to derive risk bounds for the learned classifier,
which exhibit a form of double descent.

\subsection{Contibutions}
In this work, we extend our theoretical understanding of generalization, by way of the following contributions:
\begin{enumerate}
  \item Defining the \sGClong{} property, a notion of uniform convergence for sequences of learning problems.

  \item Proposing to study generalization error of learning algorithms---including interpolating ones---in terms of surrogate hypotheses that may belong to \sGClong{} classes, even when the original hypotheses do not.

  \item Demonstrating that the hypothesis spaces corresponding to a sequence of unregularized, overparameterized linear regression tasks are not \sGClong{}, but that they can be analyzed by introducing a sequence of surrogates for which the surrogate hypothesis class is \sGClong{}. We further use this fact to provide bounds on the expected risk of the original sequence of tasks under the same hypotheses as \citet{bartlett2019benign}.

  \item Introducing a generic technique by which one may introduce surrogate learning algorithms via conditioning, which naturally trades empirical risk for generalization error relative to the original learning algorithm.

  \item Analyzing an example that distills the key features of an example in \citet{NK19c}, via a family of surrogates obtained from conditioning. We show that, while the original learning algorithm does not output hypotheses in a sequence of classes with the \sGClong{} property, discarding a few bits of information leads to one that does. We also show that bounds obtained via the surrogate learning algorithm exhibit a form of double descent.
\end{enumerate}

\section{Preliminaries}
Let $Z_1,\dots,Z_n$ be i.i.d.\ random elements in a space $\dataspace$ with common distribution $\Dist$.
Let $S=(Z_1,\dots,Z_n)$ represent the training sample.
Fix a loss function $\ell : \HS \times \dataspace \to \NNReals$ for a space $\HS$ of hypotheses.
Let $\GibbsC$ be the space of distributions on $\HS$. Note that $\HS$ can be embedded into $\GibbsC$ by the map $h \mapsto \delta_h$ taking a classifier to a Dirac measure degenerating on $\{{h}\}$.
For $Q \in \GibbsC$, the \defn{(average) loss} and \defn{risk} are defined to be
\*[
\loss(Q,z) &= \int \loss(h,z) Q(\dee h) ,
  & \Risk{\Dist}{Q} &= \int \loss(Q,z) \Dist(\dee z) .
\]
Let $\EmpRisk{S}{Q} = \Risk{\widehat \Dist_n}{Q}$ denote the \defn{empirical (average) risk},
where $\widehat \Dist_n = \frac 1 n \sum_{i=1}^{n} \delta_{Z_i}$ is the empirical distribution.
For $h \in \HS$, define $\Risk{\Dist}{h} = \Risk{\Dist}{\delta_h}$ and $\EmpRisk{S}{h} = \EmpRisk{S}{\delta_h}$.
Let $\hh$ or $\hh(S)$ be a random element in $\HS$, representing a learned classifier.

A hypothesis $h$ \defn{interpolates} a dataset $S$ with respect to a non-negative loss $\loss$ when $L_S(h) = 0$.
A learning algorithm $\hh(S)$ is (almost surely) \emph{interpolating} if $L_S(\hh(S)) = 0 $ a.s. (or equivalently $\EE L_S(\hh_S) =0$).
This extends our geometric intuition that a surface $h:\Reals^d\to \Reals$ interpolates points in $\set{(x_i,y_i)}_{i\in\range{n}}\Reals^{d}\times\Reals$ when $\rbra{h(x_i) - y_i}^2 = 0 $ for all $i\in\range{n}$. The surprising properties of interpolating classifiers are explored in \citet{belkin2019reconciling}. See also \citet{advani2017high,geiger2019jamming}.

\section{\SGCadj{} Uniform Convergence}
\label{sec:SGC}
\citet{NK19c} argue that uniform convergence does not explain generalization in several examples that are emblematic of the modern interpolating regime. In those examples, however,
the size of the learning problem varies with the cardinality of the training dataset.
The standard notion of uniform convergence (i.e., of Glivenko--Cantelli classes, etc.) is not normally defined in this setting.
In order to formalize the specific failure of ``uniform convergence'' in these sequences of learning problems,
we introduce a \emph{\sGCadj{}} version of the Glivenko--Cantelli property.

\newcommand{\upper}[1]{^{(#1)}}

\newcommand{\sgcDist}[1]{\Dist\upper{#1}}
\newcommand{\sgcInstances}[1]{\dataspace\upper{#1}}
\newcommand{\sgcSA}[1]{\Ff\upper{#1}}

\newcommand{\sgcPrSpace}[1]{\rbra{\sgcInstances{#1},\sgcSA{#1},\sgcDist{#1}}}

\newcommand{\sgcHS}[1]{\HS\upper{#1}}
\newcommand{\sgcSize}[1]{n_{#1}}

\begin{definition} Let $\set[0]{\sgcPrSpace{p}}_{p\in \Nats}$ be a sequence of probability spaces where $\sgcInstances{p}$ denotes the sample space, $\sgcSA{p}$ denotes the $\sigma$-field and $\sgcDist{p}$ denotes the probability measure. Let $\sgcHS{p}$ be a collection of measurable functions on $\sgcPrSpace{p}$ and let $\sgcSize{p}\in\Nats$ for all $p\in\Nats$.

Then $\sgcHS{\cdot}$ has the \emph{\sGCadj{} ($\sgcDist{\cdot},\sgcSize{(\cdot)}$)-Glivenko--Cantelli} property, denoted $(\sgcDist{\cdot},\sgcSize{(\cdot)})$-\SGC, if
\*[
  \lim_{p\to\infty}\EE \sbra{\sup_{h\in \sgcHS{p}} \abs{\sgcDist{p} h - \widehat{\sgcDist{p}}_{\sgcSize{p}}h} } = 0 ,
\]
where $P h = \int h(x) P(d x)$ and $\widehat{\sgcDist{p}}_{\sgcSize{p}}$ is the empirical distribution of an IID sample of size $\sgcSize{p}$ from $\sgcDist{p}$.
\end{definition}
It is this property which is made to fail in the examples presented by \citet{NK19c}.
When $\set{\sgcPrSpace{p}}_{p\in \Nats}$ and $\sgcHS{p}$ are constant and $\sgcSize{p}=p$, this reduces to the classical notion of Glivenko--Cantelli.

\begin{remark}[Relationship between PAC and nonuniform learning] PAC learnability and nonuniform learnability, as defined in \citep{shalev2014understanding}, can both be understood in terms of the \sGClong{} property.
That PAC learnability implies \sGClong{} follows from the equivalence of PAC learnability with the uniform Glivenko--Cantelli property.

To understand the nonuniform learnability of a class $\Hh$ in terms of the \sGClong{} property, recall the equivalence that $\Hh$ is nonuniformly learnable if and only if it is a countable union of VC classes---$\Hh = \Union_{j\in\nats} \Hh_j$ with $\Union_{j\in\range{p}}\Hh_j$ of finite VC dimension $d_p$.
Then, for any sequences $\delta_p \searrow 0$ and $\epsilon_p\searrow 0$, take $n_p \geq C_2\frac{d_p+\log(1/\delta_p)}{\epsilon_p^2}$ where $C_2$ is the universal constant appearing in \citep[Theorem~6.8, Item~1.]{shalev2014understanding}.
Taking $\set[0]{\sgcPrSpace{p}}_{p\in \Nats}$ to be constant, and $\Hh\upper{p} = \Union_{j\in\range{p}}\Hh_j$, it follows immediately  that
\*[
  \EE \sbra{\sup_{h\in \sgcHS{p}} \abs{\sgcDist{p} h - \widehat{\sgcDist{p}}_{\sgcSize{p}}h} }
    & \leq (1-\delta_p)\epsilon_p+\delta_p \to 0,
\]
and hence $\Hh\upper{\cdot}$ is $(\sgcDist{\cdot},\sgcSize{(\cdot)})$-\SGC. It would be reasonable in this case to say that $\Hh\upper{\cdot}$ is $(\sgcSize{(\cdot)})$-\SGC{} uniformly over data generating distributions.

The partitioning of the hypothesis space in synchronization with increasing sample size in this derivation is similar to the partitioning of the hypothesis space by sample size occurring in the structural risk minimization algorithm for nonuniform learning. The analysis above tells us that any ERM algorithm restricted to $\Hh\upper{p}$ when the sample size is $\sgcSize{p}$ will achieve low generalization error.
\end{remark}

\section{Decompositions of Generalization Error using Surrogate Classifiers}\label{sec:decomp}
We now describe how one may pass from bounding the generalization error of a learning algorithm to bounding the generalization error of a surrogate and controlling differences in the risk and empirical risk profiles between the original algorithm and the surrogate.

The following result is immediate from the linearity of expectation:
\begin{lemma}[Surrogate decomposition]
\label{surrogate}
For every random element $Q$ in $\GibbsC$,
\*[
 \EE[\Risk{\Dist}{\hh} - \EmpRisk{S}{\hh}]
&=  \smash{\EE[ \Risk{\Dist}{\hh} - \Risk{\Dist}{Q} ]}
\\&\;+ \smash{\EE[ \Risk{\Dist}{Q} - \EmpRisk{S}{Q} ]}
\\&\;+ \smash{\EE[ \EmpRisk{S}{Q} - \EmpRisk{S}{\hh} ] }
,
\]
provided the three expectations on the r.h.s.\ are finite.
\end{lemma}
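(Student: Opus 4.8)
The plan is to reduce the statement to a deterministic telescoping identity and then invoke linearity of expectation; there is essentially no probabilistic content beyond bookkeeping of integrability.

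First I would record the pointwise identity that holds for every realization of the data $S$, the learned hypothesis $\hh$, and the random element $Q \in \GibbsC$:
\*[
\Risk{\Dist}{\hh} - \EmpRisk{S}{\hh}
  = \bigl(\Risk{\Dist}{\hh} - \Risk{\Dist}{Q}\bigr)
  + \bigl(\Risk{\Dist}{Q} - \EmpRisk{S}{Q}\bigr)
  + \bigl(\EmpRisk{S}{Q} - \EmpRisk{S}{\hh}\bigr).
\]
This holds because the terms $\Risk{\Dist}{Q}$ and $\EmpRisk{S}{Q}$ each occur once with a plus and once with a minus sign and hence cancel. It uses only that the four quantities are well-defined real numbers, which follows from the definitions $\loss(Q,z) = \int \loss(h,z)\,Q(\dee h)$, $\Risk{\Dist}{Q} = \int \loss(Q,z)\,\Dist(\dee z)$, and $\EmpRisk{S}{Q} = \Risk{\widehat\Dist_n}{Q}$, together with the embedding $h \mapsto \delta_h$ of $\HS$ into $\GibbsC$ applied to $\hh$.

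Second, I would take expectations of both sides. By hypothesis each of $\EE[\Risk{\Dist}{\hh} - \Risk{\Dist}{Q}]$, $\EE[\Risk{\Dist}{Q} - \EmpRisk{S}{Q}]$, and $\EE[\EmpRisk{S}{Q} - \EmpRisk{S}{\hh}]$ is finite, so each summand on the right-hand side of the identity is an integrable random variable. Their sum is therefore integrable, and linearity of expectation gives that the expectation of the sum is the sum of the expectations; combined with the identity this yields the claimed equality (and, as a byproduct, shows that the left-hand side $\EE[\Risk{\Dist}{\hh} - \EmpRisk{S}{\hh}]$ is finite).

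The only point needing any care — and the closest thing to an obstacle — is measurability: one must check that $\Risk{\Dist}{Q}$ and $\EmpRisk{S}{Q}$ are genuine random variables, so that the expectations in the statement are meaningful. This follows from joint measurability of $(Q,z) \mapsto \loss(Q,z)$ as the integral of the nonnegative measurable kernel $\loss$ against the measurable family $Q$, after which $\Risk{\Dist}{Q}$ and $\EmpRisk{S}{Q}$ arise by integrating against the fixed measure $\Dist$ and the random empirical measure $\widehat\Dist_n$, respectively. No concentration, exchangeability, or structural property of $\hh$ or $Q$ is used — which is exactly why the decomposition is flexible enough to specialize to the various surrogates considered in the later sections.
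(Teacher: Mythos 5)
Your proof is correct and matches the paper's own treatment: the paper declares the lemma ``immediate from the linearity of expectation,'' which is precisely your telescoping identity followed by linearity, with the integrability hypothesis guaranteeing each expectation is well defined. Your additional remark on measurability of $\Risk{\Dist}{Q}$ and $\EmpRisk{S}{Q}$ is a reasonable, if routine, elaboration beyond what the paper records.
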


This decomposition suggests that one can obtain a bound on the generalization error (and then the risk) of $\hh$ by bounding the three terms individually.
We interpret $Q$ here as a (possibly randomized) surrogate hypothesis that is coupled with $\hh$ via some information in the training algorithm and/or the training data.
The choice of $Q$ trades off one term for another.
In the particular case of a.s.\ interpolating classifiers (i.e., $\EE[L_S(\hh)] = 0$), one approach is to trade excess empirical risk,
$\EE[ \EmpRisk{S}{Q} - \EmpRisk{S}{\hh} ]$,
for less generalization error, $\EE[ \Risk{\Dist}{Q} - \EmpRisk{S}{Q} ]$.

One way to control generalization error is to show that $Q$ belongs to a nonrandom class
for which there holds a uniform and tight bound on generalization error.
\begin{proposition}[Bounded loss, two-sided control]\label{supbound}
Assume $\loss$ takes values in an interval of length $L$.
For every random element $Q$ in $\GibbsC$ and class $G \subseteq \ProbMeasures{\HS}$,
\*[
 & \EE [ \Risk{\Dist}{Q} - \EmpRisk{S}{Q} ] \\
 &\quad \le  L\,\Pr[Q \not\in G] + \EE \Big[ \sup_{P \in G}{ \abs {\Risk{\Dist}{P} - \EmpRisk{S}{P}} } \Big ].
\]
\end{proposition}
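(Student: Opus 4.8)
The plan is to split on the event $\set{Q \in G}$ and bound the two resulting pieces by hand, then take expectations. Since the left-hand side depends only on the training sample $S$ and the surrogate $Q$, I would first decompose, for each realization,
\[
\Risk{\Dist}{Q} - \EmpRisk{S}{Q}
  &= \ind{Q \in G}\rbra{\Risk{\Dist}{Q} - \EmpRisk{S}{Q}} \\
  &\quad + \ind{Q \notin G}\rbra{\Risk{\Dist}{Q} - \EmpRisk{S}{Q}},
\]
and then bound each summand pointwise before integrating.

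On $\set{Q \in G}$, the realized $Q$ is a particular member of $G$, so $\Risk{\Dist}{Q} - \EmpRisk{S}{Q} \le \abs{\Risk{\Dist}{Q} - \EmpRisk{S}{Q}} \le \sup_{P \in G}\abs{\Risk{\Dist}{P} - \EmpRisk{S}{P}}$; as the right-hand side is nonnegative, the bound $\ind{Q\in G}\rbra{\Risk{\Dist}{Q} - \EmpRisk{S}{Q}} \le \sup_{P \in G}\abs{\Risk{\Dist}{P} - \EmpRisk{S}{P}}$ holds everywhere on the sample space. On the complement $\set{Q \notin G}$ I would use only the boundedness hypothesis: if $\loss$ takes values in an interval $I$ of length $L$, then for any $P \in \GibbsC$ the average loss $\loss(P,z) = \int \loss(h,z)\,P(\dee h)$ lies in $I$, hence so do $\Risk{\Dist}{P} = \int \loss(P,z)\,\Dist(\dee z)$ and $\EmpRisk{S}{P} = \int \loss(P,z)\,\widehat\Dist_n(\dee z)$, each being an average of values in $I$. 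Consequently $\abs{\Risk{\Dist}{P} - \EmpRisk{S}{P}} \le L$ for every $P$, so $\ind{Q\notin G}\rbra{\Risk{\Dist}{Q} - \EmpRisk{S}{Q}} \le L\,\ind{Q \notin G}$.

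Summing the two pointwise bounds, taking expectations, and using $\EE\sbra{\ind{Q\notin G}} = \Pr\sbra{Q \notin G}$ would yield the claimed inequality. There is no real obstacle here; the only point needing a word of care is measurability of the map $S \mapsto \sup_{P\in G}\abs{\Risk{\Dist}{P} - \EmpRisk{S}{P}}$, the usual empirical-process measurability caveat. I would dispose of it either by assuming mild regularity on $G$ (e.g., that the supremum is realized along a countable subfamily) or by reading the expectation of the supremum as an outer expectation; in either reading the argument is unaffected, since the domination $\ind{Q\in G}\rbra{\Risk{\Dist}{Q} - \EmpRisk{S}{Q}} \le \sup_{P\in G}\abs{\Risk{\Dist}{P} - \EmpRisk{S}{P}}$ holds pointwise.
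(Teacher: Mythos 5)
Your proof is correct and is exactly the argument the paper intends (the proposition is stated without an explicit proof, being regarded as immediate): split on the event $\{Q\in G\}$, bound the difference by the supremum over $G$ there, and by $L$ on the complement using that both the risk and the empirical risk are averages of loss values lying in a common interval of length $L$. Your handling of the "interval of length $L$" hypothesis and the standard measurability caveat about the supremum is appropriate, so nothing is missing.
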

Just as we interpret $Q$ as a surrogate hypothesis that depends on the dataset $S$, we view $G$ in \cref{supbound} as a surrogate hypothesis class that contains the surrogate hypothesis with high probability.

The surrogate decomposition may be viewed as similar to a one-step covering argument, where the cover is given by the class of surrogate hypotheses, and the approximation error is given by $\EE[ \Risk{\Dist}{\hh} - \Risk{\Dist}{Q} ] + \EE[ \Risk{\Dist}{Q} - \EmpRisk{S}{Q}] $. In a typical one-step covering argument, the cover is chosen to be sufficiently fine as to have a uniformly small approximation error. The optimal cover density will vary with sample size so that approximation error vanishes as sample size increases. The key difference here is that we may not be able to control the approximation error uniformly or have any hope that it will vanish based on the covering induced by a surrogate. We will only attempt to uniformly control the cover given by the surrogate class. We then can rely on other techniques to handle the approximation error. This allows us to divide the objective of explaining generalization into a portion explained by uniform convergence and portion not explained by uniform convergence.

\section{Overparameterized Linear Regression}
\label{sec:LinearRegression}
Our first application of using surrogates and the \sGClong{} property to understand generalization error is inspired by the recent work of \citet{bartlett2019benign}. They determine necessary and sufficient conditions under which the \emph{minimum norm interpolating linear predictor} generalizes well in mean-squared error for random design linear regression in the overparameterized regime (i.e., more features than observations) for sub-Gaussian random designs with conditionally sub-Gaussian residuals. We chose overparameterized linear regression as a first example to present because of 1) the natural failure of any notion of uniform convergence to explain performance in the problem (in particular we show that the \sGClong{} property fails for any classes containing the learned hypotheses with high probability), and 2) recent work (such as by \citet{hastie2019surprises,mei2019generalization} and others) showing that double descent occurs in variants of this problem such as random feature regression.

We show that the decomposition used by \citet{bartlett2019benign} can be naturally related to a decomposition via a surrogate hypothesis, and that said sequence of classes of attainable surrogates is \sGClong{}. We use the uniform convergence of the surrogate to provide bounds on the expected generalization error. We consider only the Gaussian random design / Gaussian response case, but note that the results can be extended to the sub-Gaussian with minor modifications.
We provide bounds on the expected generalization error only, as the purpose of this example is to illustrate how uniform convergence of a surrogate may be used. 

\citet{bartlett2019benign} define a sequence of covariance matrices $\Sigma_n \in \Reals^{d_n\times d_n}$ to be \emph{benign} when
\*[
  \lim_{n\to \infty} \rbra{
    \sqrt{\frac{r_0(\Sigma_{n})}{n}}
    +\frac{k^*_n(\Sigma_n)}{n} +\frac{n}{R_{k^*_n(\Sigma_n)}(\Sigma_n)}} = 0,
\]
where (for some universal constant $b>0$ defined in \citep{bartlett2019benign})
\*[
  r_k(\Sigma_n)
    & = \frac{\sum_{i>k} \lambda_i(\Sigma_n)}{\lambda_{k+1}(\Sigma_n)}
      & R_k(\Sigma_n)
        & = \frac{\rbra{\sum_{i>k}\lambda_i(\Sigma_n)}^2}{\sum_{i>k}\lambda_i^2(\Sigma_n)},
\]
$\set{\lambda_i(\Sigma_n)}_{i\in \range{d}}$ are the eigenvalues of $\Sigma$ in decreasing order (with multiplicity), and
\*[
  k^*_n(\Sigma_n)
    & = \min\set{k\geq 0: r_k(\Sigma_n)\geq b n}.
\]
Their work shows that it is sufficient that $\Sigma_n$ be \emph{benign} in order to guarantee that $L_D(\hat\beta)\to \sigma^2$. They also show that it is necessary that
\*[
  \lim_{n\to \infty} \rbra{\frac{k^*_n}{n} +\frac{n}{R_{k^*_n(\Sigma_n)}(\Sigma_n)}} = 0
\]
in order for $\EE L_D(\hat\beta) \to \sigma^2$. Note that the risk of the true coefficient vector, $\beta$ (which is also the global optimizer of the risk) is $\sigma^2$, and so we expect the risk of any estimator to be at least that large.

\subsection{Construction}
Let $X_i\distiid\normaldist_{1\times d}(0,\Sigma_n)$ be random row vectors with non-singular $d\times d$ feature covariance matrix $\Sigma_n$ for $i\in\set{1,\dots,n}$. Let $X = (X_1',\dots X_n')'$ be the corresponding $n\times d$ random design matrix. Let $(Y_i\given X)\distind \normaldist(X_i\beta_n,\sigma^2)$ and $Y = (Y_1,\dots Y_n)'$ be the responses and response vector respectively. Let $Z = Y-X\beta_n$ be the residual vector. The loss function will be squared error $\loss(\beta, (x,y)) = (x\beta -y)^2$. We want to understand the generalization performance of the
\emph{minimum norm interpolating linear predictor} for $(X,Y)$, $\hat \beta(X,Y) = (X'X)^+X'Y$ where $A^+$ denotes the Moore--Penrose pseudo-inverse for $A$.

\subsection{Failure of uniform convergence for this problem}
\begin{lemma}[Failure of uniform convergence for overparameterized linear regression]
  \label{lem:lr-failureUC}
  There is no sequence of measurable sets $\set{A_n}_{n\in\Nats}$ such that $\Pr((X,Y)\in A_n)>2/3$ for all $n\in\Nats$ and for which
  \s*[
    \limsup_{n\to\infty}\EE \sup_{ (\tilde X,\tilde Y) \in A_n} \abs{L_D(\hat\beta(\tilde X,\tilde Y)) - L_S(\hat\beta(\tilde X,\tilde Y))} \leq \frac{3}{2} L_D(\beta) .
  \s]
\end{lemma}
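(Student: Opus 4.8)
The plan is a proof by contradiction built around a single ``flipped‑noise'' surrogate sample. Suppose some $\{A_n\}$ with $\Pr((X,Y)\in A_n)>2/3$ satisfied the displayed inequality. The first structural remark is that, since $\Sigma_n$ is non‑singular and $d_n>n$, the design matrix $X$ has full row rank almost surely, so $X(X'X)^+X'=I_n$; hence $\hat\beta(X,\tilde Y)$ interpolates $(X,\tilde Y)$ for \emph{every} response vector $\tilde Y$, i.e.\ $X\hat\beta(X,\tilde Y)=\tilde Y$, and therefore
\[
L_S\bigl(\hat\beta(X,\tilde Y)\bigr)=\tfrac1n\,\|\tilde Y-Y\|^2 .
\]
So feeding $\hat\beta$ a sample that shares the features $X$ but has a response far from the observed $Y$ produces a predictor with large empirical risk on $S$; the idea is to choose that far response so that the resulting sample is still distributed like $S$ (hence lands in $A_n$ with probability $>2/3$) while the predictor's \emph{population} risk stays near $L_D(\beta)$.

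The right choice reflects the label noise about the regression surface: set $\tilde S_2:=(X,\,2X\beta_n-Y)=(X,\,X\beta_n-Z)$ with $Z=Y-X\beta_n$. Since $Z\mid X\sim\normaldist(0,\sigma^2 I_n)$ is symmetric, $\tilde S_2\eqdist(X,Y)=S$, so $\Pr(\tilde S_2\in A_n)=\Pr(S\in A_n)>2/3$. On the event $\{\tilde S_2\in A_n\}$ we instantiate the supremum at $\tilde S=\tilde S_2$; the identity above gives $L_S(\hat\beta(\tilde S_2))=\tfrac1n\|(X\beta_n-Z)-(X\beta_n+Z)\|^2=\tfrac4n\|Z\|^2$, so
\[
\sup_{(\tilde X,\tilde Y)\in A_n}\bigl|L_D(\hat\beta(\tilde X,\tilde Y))-L_S(\hat\beta(\tilde X,\tilde Y))\bigr|\ \ge\ \bigl|L_D(\hat\beta(\tilde S_2))-\tfrac4n\|Z\|^2\bigr|\quad\text{on }\{\tilde S_2\in A_n\}.
\]
By the law of large numbers $\tfrac4n\|Z\|^2\to 4\sigma^2=4L_D(\beta)$, while $L_D(\hat\beta(\tilde S_2))\eqdist L_D(\hat\beta(X,Y))$, which — this is the step that invokes the regime under study — is risk‑consistent, $L_D(\hat\beta(X,Y))\to L_D(\beta)$. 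Hence the right‑hand side converges in probability to $|L_D(\beta)-4L_D(\beta)|=3L_D(\beta)$.

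To finish, take expectations: for fixed $\delta>0$, intersect with the (probability‑$1-o(1)$) event that $\tfrac4n\|Z\|^2$ and $L_D(\hat\beta(\tilde S_2))$ are within $\delta$ of their limits to obtain $\EE\,[\sup_{(\tilde X,\tilde Y)\in A_n}|L_D-L_S|]\ge(3L_D(\beta)-\delta)\bigl(\Pr(\tilde S_2\in A_n)-o(1)\bigr)\ge(3L_D(\beta)-\delta)(\tfrac23-o(1))$; letting $n\to\infty$ and then $\delta\to0$ gives $\liminf_n$ of the left‑hand side $\ge 2L_D(\beta)>\tfrac32L_D(\beta)$, contradicting the assumed bound. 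The main obstacle is exactly the control of $L_D(\hat\beta(\tilde S_2))$: if the interpolator's risk were allowed to drift up toward $4L_D(\beta)$ the gap would collapse, so the argument genuinely needs that the min‑norm interpolator's risk stays close to $L_D(\beta)$; together with the measure lower bound this is what pins down the constant $\tfrac32$ (any threshold strictly above $\tfrac12$ would do). For robustness one can also carry along the trivial bound $\sup_{(\tilde X,\tilde Y)\in A_n}|L_D-L_S|\ge L_D(\hat\beta(X,Y))$ obtained at $\tilde S=S$, which covers the case of a poorly behaved interpolator. The remaining ingredients — almost‑sure full row rank of $X$, the Gaussian symmetry $Z\eqdist-Z$, concentration of $\tfrac1n\|Z\|^2$, and exchanging limits with the expectation — are routine.
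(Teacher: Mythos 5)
Your proposal is correct and follows essentially the same route as the paper's proof: the same flipped-noise sample $(X,\,2X\beta_n-Y)$, which is equal in distribution to $S$ and interpolated with empirical risk $\tfrac{4}{n}\|Z\|^2$, combined with the risk consistency $L_D(\hat\beta)\to\sigma^2$ from \citet{bartlett2019benign} and the law of large numbers to force the generalization gap toward $3\sigma^2$ on a set of probability at least $2/3$. The only difference is cosmetic: you convert the in-probability limits into the expectation bound by truncating on a high-probability event, whereas the paper couples the spaces across $n$, passes to an almost-sure subsequence, and applies Fatou's lemma; both yield the same lower bound $2\,L_D(\beta)>\tfrac{3}{2}L_D(\beta)$.
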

The proof of this result is found in \cref{apx:Proofs2}.

\subsection{Introducing a surrogate}
We will consider the surrogate given by the minimum norm interpolating predictor for the training data with label noise removed. Mathematically, this can be defined by taking $\hat\beta_0 = (X'X)^+X'X\beta$. Notice that $\hat \beta_0 = P(X)\beta$ where $P(X)$ is the projection onto the row-space of $X$.

The surrogate decomposition of the generalization error for $\hat\beta$ is given in the following lemma.
\begin{lemma}[Surrogate decomposition of $\hat\beta$]
  \label{lem:LR-sur-decomp}
  \s*[
    & L_D(\hat \beta ) - L_S(\hat \beta) \\
      &\quad  = (L_S(\hat \beta_0)-L_S(\hat\beta)) + (L_D(\hat \beta) - L_D(\hat \beta_0))\\ &\qquad\quad + (L_D(\hat \beta_0) - L_S(\hat \beta_0)) ,
  \s]
  with
  \s*[
    L_S(\hat \beta_0) - L_S(\hat\beta)
      & = \frac{1}{n}\norm{Z}^2 \\
    L_D(\hat \beta) - L_D(\hat \beta_0)
      & = \tr(X(X'X)^+\Sigma_n(X'X)^+X'Z Z')\\
    L_D(\hat \beta_0) - L_S(\hat \beta_0)
      & = \sigma^2-\frac{\norm{Z}^2}{n} + \beta_n'P(X)^\perp \Sigma_n P(X)^\perp \beta_n
  \s]
\end{lemma}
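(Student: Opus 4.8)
The overall plan is to compute each of the three differences on the right-hand side directly, using the explicit formulas $\hat\beta = (X'X)^+X'Y$, $\hat\beta_0 = (X'X)^+X'X\beta_n = P(X)\beta_n$, and $Y = X\beta_n + Z$; the identity $L_D(\hat\beta) - L_S(\hat\beta) = \sum$ of the three terms is then immediate by telescoping. So the content is entirely in the three boxed formulas.

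First I would record a few elementary facts about the hat matrix $H := X(X'X)^+X'$ and the row-space projection $P(X)$: that $H$ is the orthogonal projection in $\Reals^n$ onto the column space of $X$, that on the training sample we may write $\hat\beta - \hat\beta_0 = (X'X)^+X'Z$, and hence $X\hat\beta - X\hat\beta_0 = HZ$. Since $\hat\beta$ interpolates, $X\hat\beta = Y = X\beta_n + Z$, so $X\hat\beta_0 = X\beta_n + Z - HZ = X\beta_n + (I-H)Z$. For the first formula, I would expand the empirical squared-error risks: $L_S(\hat\beta_0) - L_S(\hat\beta) = \frac1n\|X\hat\beta_0 - Y\|^2 - \frac1n\|X\hat\beta - Y\|^2$. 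The second term is $0$ by interpolation, and $X\hat\beta_0 - Y = X\hat\beta_0 - X\beta_n - Z = (I-H)Z - Z = -HZ$; wait — more carefully, $X\hat\beta_0 - Y = (X\beta_n + (I-H)Z) - (X\beta_n + Z) = -HZ$, so $\frac1n\|X\hat\beta_0-Y\|^2 = \frac1n\|HZ\|^2$. Hmm, this gives $\frac1n\|HZ\|^2$, not $\frac1n\|Z\|^2$; I expect the intended identity uses that in the overparameterized regime $\rank(X) = n$ a.s., so $H = I_n$ and $\|HZ\|^2 = \|Z\|^2$. I would state this overparameterization/full-row-rank hypothesis explicitly (this is presumably exactly the missing hypothesis alluded to in the corrigendum footnote) and use it freely: with $H = I_n$ we get $X\hat\beta_0 = X\beta_n$, i.e.\ $\hat\beta_0$ fits the noiseless responses, $L_S(\hat\beta_0) = \frac1n\|Z\|^2$, and the first formula follows.

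For the population-risk difference, I would use the bias--variance-style identity $L_D(\gamma) = \sigma^2 + (\gamma - \beta_n)'\Sigma_n(\gamma-\beta_n)$ valid for any fixed $\gamma$ (expanding $\EE(x\gamma - y)^2$ over $x\sim N(0,\Sigma_n)$, $y = x\beta_n + \varepsilon$). Then $L_D(\hat\beta) - L_D(\hat\beta_0) = (\hat\beta-\beta_n)'\Sigma_n(\hat\beta-\beta_n) - (\hat\beta_0 - \beta_n)'\Sigma_n(\hat\beta_0-\beta_n)$; substituting $\hat\beta - \hat\beta_0 = (X'X)^+X'Z$ and $\hat\beta_0 - \beta_n = -P(X)^\perp\beta_n$, the cross term $(\hat\beta_0-\beta_n)'\Sigma_n(X'X)^+X'Z$ should be handled — note it need not vanish, but the claimed formula $L_D(\hat\beta) - L_D(\hat\beta_0) = \tr(X(X'X)^+\Sigma_n(X'X)^+X'ZZ')$ is exactly $(\hat\beta-\hat\beta_0)'\Sigma_n(\hat\beta-\hat\beta_0)$ written via the cyclic trace identity $v'\Sigma v = \tr(\Sigma vv')$ with $v = (X'X)^+X'Z$. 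So I would need to check that the remaining two terms, $2(\hat\beta_0-\beta_n)'\Sigma_n(\hat\beta-\hat\beta_0) + \big[(\hat\beta_0-\beta_n)'\Sigma_n(\hat\beta_0-\beta_n) - (\hat\beta_0-\beta_n)'\Sigma_n(\hat\beta_0-\beta_n)\big]$, collapse correctly — i.e.\ that the algebra is genuinely $(\hat\beta - \beta_n)'\Sigma(\hat\beta-\beta_n) - (\hat\beta_0-\beta_n)'\Sigma(\hat\beta_0-\beta_n) = (\hat\beta-\hat\beta_0)'\Sigma(\hat\beta-\hat\beta_0) + 2(\hat\beta-\hat\beta_0)'\Sigma(\hat\beta_0-\beta_n)$, so the stated formula implicitly asserts the cross term vanishes. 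This is the one place I would slow down: I would verify $(\hat\beta-\hat\beta_0)'\Sigma_n(\hat\beta_0-\beta_n) = Z'X(X'X)^+\Sigma_n P(X)^\perp\beta_n$ and argue it is zero — but in general $\Sigma_n$ does not commute with $P(X)$, so I suspect the correct reading is that the decomposition is being stated for the \emph{expected} risk, or that there is an implicit conditioning; I would reconcile this against the precise statement and, if necessary, carry the cross term. Finally, the third formula is pure bookkeeping: $L_D(\hat\beta_0) - L_S(\hat\beta_0) = \big(\sigma^2 + \beta_n'P(X)^\perp\Sigma_n P(X)^\perp\beta_n\big) - \frac1n\|Z\|^2$, using the bias--variance identity with $\hat\beta_0 - \beta_n = -P(X)^\perp\beta_n$ for the population term and $L_S(\hat\beta_0) = \frac1n\|Z\|^2$ from the first part.

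The main obstacle is the second formula: keeping careful track of which cross terms vanish and which survive, and in particular pinning down whether the identity holds pathwise or only in expectation (and under exactly which full-rank / overparameterization hypothesis). Everything else is linear algebra with projections plus the elementary Gaussian computation of $L_D$ on a fixed coefficient vector.
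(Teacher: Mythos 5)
You follow essentially the same route as the paper's appendix proof: compute the three differences directly from $\hat\beta=(X'X)^+X'Y$, $\hat\beta_0=P(X)\beta_n$ and $Y=X\beta_n+Z$, using the quadratic form $L_D(\gamma)=\sigma^2+(\gamma-\beta_n)'\Sigma_n(\gamma-\beta_n)$, so the content is indeed only in the three formulas. Your first and third formulas are handled as in the paper. Two small remarks there: $X\hat\beta_0=X\beta_n$ requires no rank assumption at all (it is the Moore--Penrose identity $X(X'X)^+X'X=X$), so the full-row-rank/overparameterization hypothesis is needed only to get $L_S(\hat\beta)=0$; the paper makes exactly this observation (``$P(X')=I$ a.s.'' when $d>n$ and $\Sigma_n$ is nonsingular), and it is not the missing hypothesis alluded to in the footnote, which concerns \cref{lem:lr-sgcsurr} and \cref{thm:lr-exp-risk-bd}, not this lemma.

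The point at which you slowed down is precisely where the paper's own proof is loose, and your suspicion is warranted. Writing $u=\hat\beta_0-\beta_n=-P(X)^\perp\beta_n$ and $v=\hat\beta-\hat\beta_0=(X'X)^+X'Z$, the pathwise computation gives
\begin{equation*}
L_D(\hat\beta)-L_D(\hat\beta_0)\;=\;Z'X(X'X)^+\Sigma_n(X'X)^+X'Z\;-\;2\,Z'X(X'X)^+\Sigma_n P(X)^\perp\beta_n,
\end{equation*}
and the cross term does not vanish for fixed $(X,Z)$ in general (it vanishes only when $\Sigma_n$ leaves the row space of $X$ invariant, e.g.\ $\Sigma_n\propto I$); the appendix drops it silently in the step $\EE_{x}\rbra{x(u+v)}^2-\EE_{x}\rbra{xu}^2=\EE_{x}\rbra{xv}^2$. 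What rescues the statement for its intended use is that $Z$ is mean-zero and independent of $X$, so the cross term has zero conditional expectation given $X$; hence the second formula, and therefore the displayed decomposition, is exact after taking (conditional) expectation over the label noise, which is all that is used in the proof of \cref{thm:lr-exp-risk-bd}. So the correct completion of your proposal is to carry the extra term $-2Z'X(X'X)^+\Sigma_n P(X)^\perp\beta_n$ explicitly (noting it is conditionally centered) or to state the second identity in expectation; as a purely pathwise identity it is not provable, and neither your argument nor the paper's establishes it in that form.
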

The proof appears in \cref{apx:Proofs2}. Note that $(L_D(\hat \beta) - L_D(\hat \beta_0))$ and $(L_S(\hat \beta_0) - L_S(\hat\beta))+(L_D(\hat \beta_0) - L_S(\hat \beta_0))$ are exactly the terms which \citet{bartlett2019benign} choose to bound separately. They, however, made this decision using the bias--variance decomposition of the generalization error rather than arriving at the decomposition because it arose from a choice of surrogate.

\begin{lemma}[The sequence of surrogate hypothesis classes is \SGC{}]
  \label{lem:lr-sgcsurr}
The sequence of implied surrogate hypothesis classes, $\set[0]{\hat\beta_0(S): S\in\sgcInstances{n}}_{n\in\Nats}$ is $(\sgcDist{n},n)$-\SGC{} when $\set{\Sigma_n}_{n\in\Nats}$ is benign and $\set{\norm{\beta_n}^2 \norm{\Sigma_n}}_{n\in\Nats}$ is bounded. Quantitatively, for a universal constant $C>0$,
\*[
  & \EE\sup_{(X_0,Y_0)\in\Reals^{n\times d}\times \Reals^n} \abs{L_D(\hat\beta_0(X_0,Y_0)) - L_S(\hat\beta_0(X_0,Y_0))} \\
    &\qquad \leq C\frac{\sigma^2 +  \norm{\beta_n}^2\norm{\Sigma_n} \max\rbra{\sqrt{r_0(\Sigma_n)}, r_0(\Sigma_n)/\sqrt{n}}}{\sqrt{n}}
\]
\end{lemma}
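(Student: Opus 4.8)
The plan is to prove the displayed quantitative bound directly; the $(\sgcDist{n},n)$-\SGC{} property then follows by inspection. The first step is a ``uniform'' version of \cref{lem:LR-sur-decomp}. Every member of the implied surrogate class has the form $\beta' = P\beta_n$ for an orthogonal projection $P$ of rank at most $n$ (take $P = P(X_0)$, the projection onto the row space of some $X_0 \in \Reals^{n\times d}$), and conversely every such $\beta'$ arises this way. Writing $u = \beta' - \beta_n = -P^\perp\beta_n$, the Gaussian model gives $L_D(\beta') = u'\Sigma_n u + \sigma^2$ and $L_S(\beta') = \tfrac1n\norm{Xu - Z}^2$, and expanding the square yields, simultaneously for all such $\beta'$,
\begin{equation*}
  L_D(\beta') - L_S(\beta') = u'\bigl(\Sigma_n - \tfrac1n X'X\bigr)u + \tfrac2n\, u'X'Z + \bigl(\sigma^2 - \tfrac1n\norm{Z}^2\bigr),
\end{equation*}
with the key structural fact that $\norm{u} = \norm{P^\perp\beta_n} \le \norm{\beta_n}$. (This is the sup-free content of \cref{lem:LR-sur-decomp}, now with $P$ unconstrained but of rank $\le n$.) It therefore suffices to bound the expected supremum over the surrogate class of the modulus of each of the three terms and add.

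For the first (``bias'') term, $\norm u \le \norm{\beta_n}$ gives $\sup_{\beta'}\abs{u'(\Sigma_n - \tfrac1n X'X)u} \le \norm{\beta_n}^2\norm{\Sigma_n - \tfrac1n X'X}$, and I would invoke the operator-norm concentration of sample covariance operators of \citet{koltchinskii2014concentration}: for i.i.d.\ centered Gaussian rows with covariance $\Sigma_n$,
\begin{equation*}
  \EE\norm{\tfrac1n X'X - \Sigma_n} \;\le\; C\,\norm{\Sigma_n}\,\max\!\Bigl(\sqrt{r_0(\Sigma_n)/n},\; r_0(\Sigma_n)/n\Bigr),
\end{equation*}
which is exactly $C\norm{\Sigma_n}\max(\sqrt{r_0(\Sigma_n)},\,r_0(\Sigma_n)/\sqrt n)/\sqrt n$. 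This is the heart of the argument, and it is the point at which the problem becomes tractable in the overparameterized regime: the governing complexity is the effective rank $r_0(\Sigma_n) = \tr(\Sigma_n)/\norm{\Sigma_n}$, not the ambient dimension $d$. For the last (``noise'') term, which does not depend on $\beta'$, a short variance computation for $\sum_i Z_i^2$ gives $\EE\abs{\sigma^2 - \tfrac1n\norm{Z}^2} \le \sqrt{\Var(\tfrac1n\norm{Z}^2)} = \sigma^2\sqrt{2/n}$. For the middle (``cross'') term, Cauchy--Schwarz together with $\norm u \le \norm{\beta_n}$ gives $\sup_{\beta'}\tfrac2n\abs{u'X'Z} \le \tfrac2n\norm{\beta_n}\norm{X'Z}$, and conditioning on $X$ gives $\EE\norm{X'Z}^2 = \sigma^2\EE\tr(X'X) = \sigma^2 n\,\tr(\Sigma_n) = \sigma^2 n\, r_0(\Sigma_n)\norm{\Sigma_n}$, so $\EE\sup_{\beta'}\tfrac2n\abs{u'X'Z} \le 2\sigma\,\norm{\beta_n}\sqrt{r_0(\Sigma_n)\norm{\Sigma_n}}\,/\sqrt n$; this already carries the needed $\sqrt{r_0(\Sigma_n)/n}$ decay, and, using $\norm{\beta_n}\sqrt{\norm{\Sigma_n}} = \sqrt{\norm{\beta_n}^2\norm{\Sigma_n}}$ together with an arithmetic--geometric-mean split against the $\sigma^2$ and $\norm{\beta_n}^2\norm{\Sigma_n}$ terms, it contributes at the order of the stated bound (this uses the boundedness of $\{\norm{\beta_n}^2\norm{\Sigma_n}\}$).

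Adding the three expected suprema gives the stated bound up to the universal constant. The \SGC{} conclusion is then immediate: when $\{\Sigma_n\}$ is benign its first term forces $r_0(\Sigma_n)/n \to 0$, hence $\max(\sqrt{r_0(\Sigma_n)},\,r_0(\Sigma_n)/\sqrt n)/\sqrt n \to 0$; combined with boundedness of $\{\norm{\beta_n}^2\norm{\Sigma_n}\}$ and with $\sigma^2/\sqrt n \to 0$, the right-hand side of the quantitative bound vanishes, which is precisely the $(\sgcDist{n},n)$-\SGC{} property for the sequence of surrogate classes.

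I expect the cross term to be the main obstacle. Because the supremum runs over \emph{all} rank-$\le n$ projections, it realizes the worst alignment of $P^\perp\beta_n$ with $X'Z$, so there is no cancellation to exploit there and the $\sigma\norm{\beta_n}\sqrt{r_0(\Sigma_n)\norm{\Sigma_n}}/\sqrt n$ estimate is essentially forced; getting it to sit inside the stated $\sigma^2 + \norm{\beta_n}^2\norm{\Sigma_n}\max(\cdot)$ form requires balancing the (fixed) noise scale $\sigma$ against the signal scale $\norm{\beta_n}^2\norm{\Sigma_n}$ carefully, which is where the boundedness hypothesis on $\{\norm{\beta_n}^2\norm{\Sigma_n}\}$ is used in an essential way. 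A secondary technical point is to confirm that the Koltchinskii--Lounici bound is being applied correctly in the regime $d = d_n > n$ (possibly with $d_n \to \infty$), where the effective rank, not the ambient dimension, is the right complexity parameter.
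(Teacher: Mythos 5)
Your proposal is correct and takes essentially the same route as the paper: the same three-term expansion of $L_D(\hat\beta_0(X_0))-L_S(\hat\beta_0(X_0))$ from \cref{lem:LR-sur-decomp}, the same variance computation for $\sigma^2-\tfrac1n\norm{Z}^2$, the same Cauchy--Schwarz-plus-trace argument for the cross term, and the same appeal to \citet[Theorem~4]{koltchinskii2014concentration} for the quadratic term. The only cosmetic difference is that you bound that quadratic term by $\norm{\beta_n}^2\norm[1]{\Sigma_n-\tfrac1n X'X}$, whereas the paper writes it as a supremum over rank-$(d-n)$ projections; these are the same estimate.
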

The proof appears in \cref{apx:Proofs2}.
Combining \cref{lem:lr-sgcsurr} with \citep[Lemma 11]{bartlett2019benign} (which controls $L_D(\hat \beta) - L_D(\hat\beta_0)$), we get the following bound on the expected generalization error.
\begin{theorem}[Expected risk bound for overparameterized linear regression]
  \label{thm:lr-exp-risk-bd}
  For some universal constant $C,c>0$,
  \*[
    \EE L_D(\hat\beta)
      & \leq \sigma^2 + C\frac{\sigma^2 +  \norm{\beta_n}^2\norm{\Sigma_n} \max\rbra{\sqrt{r_0(\Sigma_n)}, r_0(\Sigma_n)/\sqrt{n}}}{\sqrt{n}} \\
      &\qquad+ c\sigma^2\rbra{\frac{k^*_n}{n} +\frac{n}{R_{k^*_n}(\Sigma_n)}}
  \]
  In particular, if $\set{\Sigma_n}_{n\in\Nats}$ is benign and $\set{\norm{\beta_n}^2\norm{\Sigma_n}}_{n\in\Nats}$ is bounded then $\EE L_D(\hat\beta)\to \sigma^2$.
\end{theorem}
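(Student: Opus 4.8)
The plan is to combine the surrogate decomposition of \cref{lem:LR-sur-decomp} with the uniform-convergence bound of \cref{lem:lr-sgcsurr} and the external bound on the bias-type term, then pass to the limit. Recall that by \cref{lem:LR-sur-decomp} we have the exact identity
\*[
  L_D(\hat\beta) - L_S(\hat\beta)
    &= \underbrace{(L_S(\hat\beta_0) - L_S(\hat\beta))}_{\text{(I)}}
     + \underbrace{(L_D(\hat\beta) - L_D(\hat\beta_0))}_{\text{(II)}}
     + \underbrace{(L_D(\hat\beta_0) - L_S(\hat\beta_0))}_{\text{(III)}}.
\]
Since $\hat\beta$ interpolates the training data, $L_S(\hat\beta) = 0$ almost surely, so taking expectations gives $\EE L_D(\hat\beta) = \EE(\text{I}) + \EE(\text{II}) + \EE(\text{III})$. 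I would bound each of the three pieces in turn.

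For term (III), I would pass through the surrogate hypothesis class: since $\hat\beta_0(X,Y)$ always lies in the (nonrandom, data-size-indexed) class of surrogates and $L_D(\hat\beta_0) - L_S(\hat\beta_0)$ is one particular instance of $\Risk{D}{P} - \EmpRisk{S}{P}$ with $P = \hat\beta_0$, we have $\EE(\text{III}) \le \EE \sup_{(X_0,Y_0)} |L_D(\hat\beta_0(X_0,Y_0)) - L_S(\hat\beta_0(X_0,Y_0))|$, which is exactly what \cref{lem:lr-sgcsurr} controls; this yields the $C\,(\sigma^2 + \norm{\beta_n}^2\norm{\Sigma_n}\max(\sqrt{r_0(\Sigma_n)}, r_0(\Sigma_n)/\sqrt n))/\sqrt n$ summand. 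For term (I), the explicit formula in \cref{lem:LR-sur-decomp} gives $L_S(\hat\beta_0) - L_S(\hat\beta) = \tfrac1n \norm{Z}^2$, and since the residual coordinates $Z_i$ are conditionally $\normaldist(0,\sigma^2)$ we get $\EE(\text{I}) = \sigma^2$ exactly — this is the leading $\sigma^2$ term in the statement. For term (II), I would invoke \citep[Lemma 11]{bartlett2019benign}, which bounds $\EE(L_D(\hat\beta) - L_D(\hat\beta_0))$ (the variance term in their analysis) by $c\sigma^2(k^*_n/n + n/R_{k^*_n}(\Sigma_n))$, possibly up to absorbing constants; here I must check that the hypotheses of their Lemma 11 (conditional Gaussianity of the residuals, the relevant spectral quantities being well-defined) are met by our construction, which they are since we work in the Gaussian design / Gaussian response case.

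Summing the three bounds gives the displayed inequality. For the final ``in particular'' claim, I would observe that when $\{\Sigma_n\}$ is benign, the definition of benign forces $k^*_n/n \to 0$, $n/R_{k^*_n}(\Sigma_n) \to 0$, and $\sqrt{r_0(\Sigma_n)/n} \to 0$; the last of these, together with boundedness of $\norm{\beta_n}^2\norm{\Sigma_n}$, makes $\norm{\beta_n}^2\norm{\Sigma_n}\sqrt{r_0(\Sigma_n)}/\sqrt n \to 0$ and also $\norm{\beta_n}^2\norm{\Sigma_n}\, r_0(\Sigma_n)/n \to 0$, and $\sigma^2/\sqrt n \to 0$ trivially, so every error term vanishes and $\EE L_D(\hat\beta) \to \sigma^2$. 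The main obstacle I anticipate is not any single estimate but the bookkeeping at the interface with \citep{bartlett2019benign}: one must verify that their Lemma 11 applies verbatim (or with only cosmetic changes) to the quantity $L_D(\hat\beta) - L_D(\hat\beta_0)$ as we have defined it via the surrogate, matching their notation $\hat\beta_0 = P(X)\beta_n$ against theirs, and confirming that the constants $b, c$ are the same universal constants; this is exactly the kind of compatibility check that the erratum footnote flags as having been the source of the original error, so care is warranted there.
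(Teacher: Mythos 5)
Your proposal follows essentially the same route as the paper: the surrogate decomposition of \cref{lem:LR-sur-decomp} together with $L_S(\hat\beta)=0$ a.s., the \SGC{} bound of \cref{lem:lr-sgcsurr} for the surrogate's generalization gap, the exact computation $\EE\sbra{\norm{Z}^2/n}=\sigma^2$ for the excess empirical risk, the variance-term bound from \citet{bartlett2019benign} for $L_D(\hat\beta)-L_D(\hat\beta_0)$, and the benign conditions to send every error term to zero. The one detail you gloss over is that the cited result of \citeauthor{bartlett2019benign} is a high-probability bound with a $\log(1/\delta)$ factor, which the paper converts into a bound in expectation by integrating the exponential tail; this is a routine step, so your argument is otherwise sound.
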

The result is similar to what one would obtain by converting the high-probability bound of \citet{bartlett2019benign} into a bound in expectation. Our approach highlights the role of uniform convergence via a surrogate in this problem. It is note-worthy that, when viewing the surrogate class as a type of one-step covering as discussed in the comments after \cref{supbound}, the approximation error component of the surrogate decomposition does not vanish in this case. Instead, it tends to $\sigma^2$ when the covariance matrices are  benign, and may have more erratic behaviour otherwise.

\section{Constructing Surrogates by Conditioning}
\label{sec:surr-cond}
In the overparameterized linear regression example, the surrogate obtained by training on ``de-label-noised'' data allowed us to construct meaningful generalization bounds. For a generic learning problem, however, there may be no notion of label noise or such an approach may not prove useful. This leads us to seek natural constructions of surrogates in less structured problems.

One generic way to introduce such a surrogate is by conditioning.
Let $\cPr{\cF}$ denote the conditional probability operator given a $\sigma$-field $\cF$ (or a random variable), taking an event to its conditional probability. For a random variable $\psi$, let $\cPr{\cF}[\psi]$ denote the conditional distribution of $\psi$ given $\cF$.

\begin{lemma}[Derandomization via conditioning]
\label{conditioning}\label{condequiv}
Let $\cF$ be a $\sigma$-field on (some possible extension of) the underlying probability space upon which $S$ and $\hh$ are defined. Let $Q = \cPr{\cF}[\hh]$.
Then $\EE[\Risk{\Dist}{\hh}- \Risk{\Dist}{Q}] = 0$.
\end{lemma}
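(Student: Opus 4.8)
The plan is to exploit the linearity of the map $Q\mapsto\Risk{\Dist}{Q}$ on $\GibbsC$ together with the defining property of a regular conditional distribution. First I would record that, since $Q=\cPr{\cF}[\hh]$ is by construction the conditional distribution of $\hh$ given $\cF$, the identity $\int g(h)\,Q(\dee h)=\EE[g(\hh)\mid\cF]$ holds almost surely for every bounded measurable $g\colon\HS\to\Reals$, and hence, by monotone convergence, for every measurable $g\colon\HS\to\NNReals$. Applying this with $g=\loss(\argdot,z)$ for each fixed $z\in\dataspace$ yields $\loss(Q,z)=\int\loss(h,z)\,Q(\dee h)=\EE[\loss(\hh,z)\mid\cF]$ almost surely.

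Next I would integrate this identity in $z$ against $\Dist$ and take expectations over the underlying probability space. Since $\loss\ge 0$, Tonelli's theorem licenses exchanging that expectation with the integral over $z$, and the tower property of conditional expectation then collapses the inner conditional expectation:
\begin{align*}
\EE[\Risk{\Dist}{Q}]
&= \EE\Big[\,\int \EE[\loss(\hh,z)\mid\cF]\,\Dist(\dee z)\Big]
= \int \EE\big[\EE[\loss(\hh,z)\mid\cF]\big]\,\Dist(\dee z)\\
&= \int \EE[\loss(\hh,z)]\,\Dist(\dee z)
= \EE\Big[\,\int \loss(\hh,z)\,\Dist(\dee z)\Big]
= \EE[\Risk{\Dist}{\hh}].
\end{align*}
Subtracting the two sides---permissible under the same finiteness proviso as in \cref{surrogate}---gives $\EE[\Risk{\Dist}{\hh}-\Risk{\Dist}{Q}]=0$.

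The mathematical content is thus just ``linearity plus the tower property,'' and the nonnegativity of $\loss$ removes any integrability concern in the interchange of integrals. The only genuinely delicate point---and the step I would treat most carefully---is the measure-theoretic bookkeeping: one needs the regular conditional distribution $\cPr{\cF}[\hh]$ to exist as an $\HS$-valued Markov kernel and the map $(\rs,z)\mapsto\loss(Q(\rs),z)$ to be jointly measurable, so that the displayed manipulations are all licensed. I would discharge this by the standing assumption that $\HS$ is a standard Borel space and $\loss$ is jointly measurable, under which existence of the regular conditional distribution and the required joint measurability are classical. I would also note in passing that no analogous identity holds for the empirical-risk term, which is precisely why only the $\Risk{\Dist}{\cdot}$ difference vanishes and the remaining two terms of \cref{surrogate} must still be controlled by other means.
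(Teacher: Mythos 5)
Your proof is correct and matches the argument the paper leaves implicit: the paper states this lemma without proof, treating it as immediate from the tower property of conditional expectation together with a Tonelli interchange, which is exactly what you spell out (including the measurability caveats needed for $\cPr{\cF}[\hh]$ to exist as a kernel). Nothing further is needed.
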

The following result is then immediate by \cref{surrogate,condequiv}.
\begin{lemma}[Surrogate decomposition by conditioning]
\label{surrogatecond}
Let $\cF$ and $Q$ be as in \cref{conditioning}.
Then
\*[
\EE[\Risk{\Dist}{\hh} - \EmpRisk{S}{\hh}]
  & = \EE[ \EmpRisk{S}{Q} - \EmpRisk{S}{\hh} ] \\
      &\ + \EE[ \Risk{\Dist}{Q} - \EmpRisk{S}{Q} ].
\]
If $\hh$ is a.s.\ interpolating (i.e., $\EE[\EmpRisk{S}{\hh}] = 0$),
then
\*[
\EE[\Risk{\Dist}{\hh}]
&= \EE[ \EmpRisk{S}{Q}]
      + \EE[ \Risk{\Dist}{Q} - \EmpRisk{S}{Q} ].
\]
\end{lemma}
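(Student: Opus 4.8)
The plan is to derive the identity as a one-line consequence of the two preceding lemmas, together with linearity of expectation. First I would apply \cref{surrogate} to the specific random element $Q = \cPr{\cF}[\hh] \in \GibbsC$. That lemma gives
\[
\EE[\Risk{\Dist}{\hh} - \EmpRisk{S}{\hh}]
 = \EE[\Risk{\Dist}{\hh} - \Risk{\Dist}{Q}]
 + \EE[\Risk{\Dist}{Q} - \EmpRisk{S}{Q}]
 + \EE[\EmpRisk{S}{Q} - \EmpRisk{S}{\hh}],
\]
provided the three right-hand expectations are finite. Next I would invoke \cref{conditioning} (i.e.\ \cref{condequiv}), which asserts precisely that $\EE[\Risk{\Dist}{\hh} - \Risk{\Dist}{Q}] = 0$ for $Q = \cPr{\cF}[\hh]$. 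Deleting this vanishing ``approximation'' term from the display yields the first claimed identity.

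For the interpolating case I would use the hypothesis $\EE[\EmpRisk{S}{\hh}] = 0$. Since $\loss \ge 0$, this is equivalent to $\EmpRisk{S}{\hh} = 0$ almost surely, so $\EE[\Risk{\Dist}{\hh} - \EmpRisk{S}{\hh}]$ collapses to $\EE[\Risk{\Dist}{\hh}]$ and $\EE[\EmpRisk{S}{Q} - \EmpRisk{S}{\hh}]$ collapses to $\EE[\EmpRisk{S}{Q}]$. Substituting both into the first identity gives the second. No further computation is required.

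The only point that needs care — and what I would treat as the main (minor) obstacle — is verifying the standing hypotheses that make the above substitutions legitimate: namely, that $Q = \cPr{\cF}[\hh]$ is well defined as a regular conditional distribution valued in $\GibbsC$ (this is part of the setup of \cref{conditioning}), and that the three expectations appearing in \cref{surrogate} are finite. For the latter it suffices to assume integrability of $\Risk{\Dist}{\hh}$ and of $\EmpRisk{S}{\hh}$, from which $\EE\Risk{\Dist}{Q} = \EE\Risk{\Dist}{\hh}$ is finite by \cref{conditioning}; finiteness of $\EE\EmpRisk{S}{Q}$ must be assumed (or recorded as automatic when $\loss$ is bounded, e.g.\ in the bounded-loss setting of \cref{supbound}). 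I would state these integrability conditions explicitly at the start of the proof and then simply apply \cref{surrogate,condequiv}.
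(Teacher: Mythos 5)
Your proposal is correct and matches the paper's argument: the paper derives \cref{surrogatecond} exactly as you do, as an immediate consequence of applying \cref{surrogate} with $Q=\cPr{\cF}[\hh]$ and cancelling the risk-difference term via \cref{condequiv}, with the interpolating case following by dropping the vanishing empirical-risk terms. Your remarks on integrability are a harmless elaboration of the "provided the expectations are finite" proviso already present in \cref{surrogate}.
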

Every conditional distribution $Q=\cPr{\cF}[\hh]$ represents a \defn{derandomization} of $\hh$:
i.e., by the definition of conditioning, $\hh$ has equal or greater dependence on the data $S$ than $Q$.
There are other ways to achieve derandomization
rather than conditioning. However, they may require one to obtain some explicit control on the risk difference,
$\EE[ \Risk{\Dist}{\hh} - \Risk{\Dist}{Q} ]$.

Informally, if $\hh$ interpolates (or more generally overfits),
we would expect a derandomized classifier to have excess empirical risk, yet lower generalization error.

Finally,
it is important to understand how tautologies can arise from this perspective.
If $Q$ is a.s.\ nonrandom (corresponding, e.g., to conditioning on the trivial $\sigma$-algebra),
then $Q=\Pr[\hh]$ a.s., i.e., $Q$ is the distribution of $\hh$.
In this case, $\EE\EmpRisk{S}{Q} = \EE \Risk\Dist{Q} = \EE \Risk\Dist{\hh}$, and we obtain the tautology
\*[
\EE[\Risk{\Dist}{\hh} - \EmpRisk{S}{\hh}]
&=
    \EE[ \Risk{\Dist}{\hh} - \Risk{\Dist}{Q} ] \\
      &\ + \EE[ \Risk{\Dist}{Q} - \EmpRisk{S}{Q} ] \\
      &\ + \EE[ \EmpRisk{S}{Q} - \EmpRisk{S}{\hh} ] \\
& = 0 + \EE[\Risk{\Dist}{\hh} - \EmpRisk{S}{\hh}] +0.
\]
For this extreme example, $Q$ belongs to the singleton class $\{{\Pr[\hh]}\}$,
which exhibits ``uniform convergence'' trivially.
On the other end of the spectrum, if $Q = \delta_{\hh}$, i.e., we condition on $\cF = \sigma(S)$,
then we get an equally tautological statement from the decomposition.
The idea behind introducing the surrogate classifier $Q$ is that it allows one to conceptually interpolate between these two tautological end points in order to find a (non-tautological) bound on the generalization error of a learning algorithm.
\section{Hypercube classifier}
\label{sec:hypercube}
\newcommand{\fs}{f^*}
\newcommand{\fh}[1]{\hat f_{#1}}
\newcommand{\tps}{\{0,1\}}

The following example is inspired by theoretical and empirical work by \citet{NK19c}.
Like in their work modelling SGD, we describe an example of a low-risk learned classifier, $\hh$,
such that there is no nonrandom class containing $\hh$ almost surely for which one may establish a uniform and nonvacuous bound on generalization error.
Using \cref{surrogatecond,supbound}, we show that a derandomization of $\hh$, obtained by conditioning on an explicit $\sigma$-field $\cF$,
yields a tight generalization bound based on uniform convergence of the surrogate.

In this section, we first construct the learning problem we will address.
Second, we show that the \sGClong{} property fails on this example even though is has low generalization error.
Lastly, we introduce our surrogate learning algorithm, show that it has similar empirical and test performance to the original algorithm, verify that the surrogate has the \sGClong{} property, and finally use this to establish a generalization bound for the original learning algorithms.

\subsection{Construction}
\renewcommand\features{\Xx}
\renewcommand\labels{\Yy}
\newcommand\instances{\Ss}
Let $d \gg 1$ index the dimensionality of the feature space and our sequence of learning problems, and let $n_d$ be the sample size for learning problem with index $d$.
Let $\features\upper{d} = \set{0,1}^{2 d }$ be the feature space and $\labels = \set{0,1}$ be the label space, and let $\sgcInstances{d}= \features\upper{d} \times \labels$.
Let $\fs_d: \features \to \labels$ be given by
\*[
\fs_d(x) = \begin{cases}
1, & \norm{X}_1 \le d, \\
0, & \text{otherwise.}
\end{cases}
\]
For $x \in \features\upper{d}$, note that $\fs_d(x) = 1- \fs_d(1-x)$.
Let $\sgcDist{d}$ be the distribution of $(X,\fs_d(X))$, where $X\dist\unifdist\rbra{\features\upper{d}}$.
Let $S = (Z_1,\dots,Z_{\sgcSize{d}}) \sim ({\sgcDist{d}})^{\sgcSize{d}}$ where $Z_i=(X_i,\fs_d(X_i))$.
Let $\fh{d}$ be the random element in $\tps^{2 d } \to \tps$ given by
\*[
\fh{d}\upper{S}(x) = \begin{cases}
1-\fs_d(x), & x\not\in S \andT 1-x\in S\\
\fs_d(x), & \text{otherwise}
\end{cases}.
\]
Let $\bar Z_i = (1-X_i,1-Y_i)$ and $\bar S = (\bar Z_1,\dots,\bar Z_n)$. We refer to pairs $(Z_i, \bar Z_i)$ as antipodes.
Our learning algorithm, $\Aa_d:S\to \fh{d}\upper{S}$, only makes a classification error when a test point was not in the training set, but its antipode was.

\subsection{Failure of uniform convergence for this problem}
First, we will note that at every problem size, $d$, the VC dimension of the collection of accessible decision rules is at least as large as the training dataset. We will not use this fact again, but it does highlight the apparent complexity of the learning problem.
\begin{proposition}[VC theory not applicable]
For $\sgcSize{d}\leq 2^{2 d-1}$, $\sgcHS{d} = \set[0]{\fh{d}^{(S)}: S \in (\sgcInstances{d})^{\sgcSize{d}}}$ has VC-dimension at least $\sgcSize{d}$.
\end{proposition}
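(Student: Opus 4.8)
The plan is to directly exhibit a subset of the feature space $\features\upper{d}=\set{0,1}^{2d}$ of size $\sgcSize{d}$ that is shattered by $\sgcHS{d}$; this yields the claimed lower bound with no appeal to Sauer--Shelah or any other combinatorial machinery. The structural fact to isolate first is how a member $\fh{d}\upper{S}$ of $\sgcHS{d}$ can differ from $\fs_d$: by definition $\fh{d}\upper{S}(x)=1-\fs_d(x)$ precisely when $x\notin S$ and the antipode $1-x$ occurs in $S$, and $\fh{d}\upper{S}(x)=\fs_d(x)$ otherwise. Thus $\fh{d}\upper{S}$ depends on $S$ only through its set of features, so (as $\sgcSize{d}\geq 1$) we may identify $\sgcHS{d}$ with $\set{\fh{d}\upper{T} : \emptyset\neq T\subseteq\set{0,1}^{2d},\ \abs{T}\leq \sgcSize{d}}$, and the label assigned to any fixed $x$ can be toggled between its two possible values $\fs_d(x)$ and $1-\fs_d(x)$ simply by inserting or deleting $1-x$ from $T$.

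Concretely, since $x\mapsto 1-x$ is a fixed-point-free involution on $\set{0,1}^{2d}$, there are exactly $2^{2d-1}$ antipodal pairs; the hypothesis $\sgcSize{d}\leq 2^{2d-1}$ lets us select $\sgcSize{d}$ of them and pick one representative $c_i$ per selected pair, giving a set $C=\set{c_1,\dots,c_{\sgcSize{d}}}$ of $\sgcSize{d}$ distinct points. Given a target labelling $b\in\tps^{\sgcSize{d}}$, set $J=\set{i : b_i\neq \fs_d(c_i)}$ and $T_J=\set{1-c_i : i\in J}$ (taking $T_J=\set{c_1}$ in the degenerate case $J=\emptyset$); any $S\in(\sgcInstances{d})^{\sgcSize{d}}$ whose feature set equals $T_J$ --- obtained by enumerating $T_J$, padding to length $\sgcSize{d}$ by repetition, and attaching arbitrary labels --- induces the hypothesis $\fh{d}\upper{T_J}$.

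It remains to check that $\fh{d}\upper{T_J}(c_i)=b_i$ for every $i$, and this is the only step needing care: the pairwise-distinctness of the selected antipodal classes is exactly what makes the single-coordinate toggles independent. For $i\in J$ we have $1-c_i\in T_J$ by construction and $c_i\notin T_J$ (since $c_i=1-c_j$ would place $c_i,c_j$ in a common pair, forcing $i=j$ and the impossibility $c_i=1-c_i$), so $\fh{d}\upper{T_J}(c_i)=1-\fs_d(c_i)=b_i$; for $i\notin J$ the same distinctness argument gives $1-c_i\notin T_J$, putting us in the ``otherwise'' branch, so $\fh{d}\upper{T_J}(c_i)=\fs_d(c_i)=b_i$. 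Since $J$ ranges over all subsets of $\set{1,\dots,\sgcSize{d}}$ as $b$ ranges over $\tps^{\sgcSize{d}}$, the set $C$ is shattered and $\VC(\sgcHS{d})\geq \sgcSize{d}$. The leftover bookkeeping --- that $\fh{d}\upper{S}$ ignores the labels stored in $S$, that padding by repetition leaves the feature set unchanged, and the degenerate case $J=\emptyset$ --- is routine; the entire substance of the argument is the independence of the toggles, which is precisely what the assumption $\sgcSize{d}\leq 2^{2d-1}$ buys.
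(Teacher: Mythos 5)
Your proof is correct and follows essentially the same route as the paper's: both shatter a set of $\sgcSize{d}$ distinct, pairwise non-antipodal feature points (possible precisely because $\sgcSize{d}\leq 2^{2d-1}$) by toggling each point's label through inclusion or exclusion of its antipode in the training set's feature multiset. Your write-up merely spells out the bookkeeping (label-independence, padding, the $J=\emptyset$ case) that the paper's one-line proof leaves implicit, and handles unflipped points by omitting their antipodes rather than by including the points themselves, which is an immaterial variation.
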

\begin{proof}
For $\sgcSize{d}\leq 2^{2 d-1}$, any set of features $(X_1,\dots X_{\sgcSize{d}})$ of size $\sgcSize{d}$ with no antipodal points and no repeated points can be shattered by the
subcollection of $\sgcHS{d}$  given by $\set{\fh{d}^{(S)}: S\in\prod_{i\in\range{\sgcSize{d}}}\set{Z_i,\bar Z_i}}$.
\end{proof}
Next, notice that this algorithm never makes an error on the training data.
\begin{lemma}[$\fh{d}$ is interpolating.]
$\EmpRisk{S}{\fh{d}} = 0$ a.s.
\end{lemma}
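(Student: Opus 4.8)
The plan is to read the empirical $0$--$1$ risk off the definition of $\fh{d}\upper{S}$ directly: I claim $\fh{d}\upper{S}$ labels every training feature correctly, so the empirical risk vanishes for \emph{every} realization of the sample (hence, a fortiori, almost surely).

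First I would fix a realization $S = (Z_1,\dots,Z_{\sgcSize{d}})$ with $Z_i = (X_i,\fs_d(X_i))$ and an index $i$. The only way $\fh{d}\upper{S}$ can differ from $\fs_d$ at a point $x$ is via the first branch of its definition, which is active precisely when $x \not\in S$. But the feature $X_i$ of the $i$th training example does belong to $S$, so that branch is inactive at $x = X_i$; we land in the ``otherwise'' case, and therefore $\fh{d}\upper{S}(X_i) = \fs_d(X_i) = Y_i$. Consequently each indicator $\One[\fh{d}\upper{S}(X_i)\neq Y_i]$ is $0$, so $\EmpRisk{S}{\fh{d}\upper{S}} = \tfrac{1}{\sgcSize{d}}\sum_{i=1}^{\sgcSize{d}}\One[\fh{d}\upper{S}(X_i)\neq Y_i] = 0$. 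Since this holds pointwise on the sample space, it holds a.s.

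There is no genuine obstacle here — the statement is true by construction, since $\fh{d}$ perturbs $\fs_d$ only \emph{off} the training set. Two small points merit a remark without affecting the one-line argument: (i) ``$x \in S$'' is to be read as membership of the feature $x$ in $\{X_1,\dots,X_{\sgcSize{d}}\}$; and (ii) no point of $\tps^{2d}$ is its own antipode (as $x = 1-x$ is impossible coordinatewise over $\tps$), so the two branches defining $\fh{d}$ remain unambiguous even when a pair $(Z_i,\bar Z_i)$ both lie in $S$.
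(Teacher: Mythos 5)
Your proof is correct and matches the paper's reasoning: the paper states this lemma without a separate proof, treating it as immediate from the observation that $\fh{d}\upper{S}$ can only disagree with $\fs_d$ at points \emph{outside} the training sample, which is exactly the one-line verification you give. Your two side remarks are harmless; in particular the branch condition and its ``otherwise'' complement are mutually exclusive by construction, so no antipodality argument is even needed for well-definedness.
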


Furthermore, by construction, the learning algorithm cannot return a classifier with high risk, no matter the training data observed, as long as $\sgcSize{d}\in o(2^{2 d })$.
\begin{lemma}[$\fh{d}$ has small risk]\label{lem:hypercube-small-risk}
$\Risk{\sgcDist{d}}{h} \leq \sgcSize{d} 2^{-2 d }$ for all $h\in\sgcHS{d}=\set[0]{\fh{d}^{(S)}: S \in (\sgcInstances{d})^{\sgcSize{d}}}$, and hence
\*[
  \Risk{\Dist}{{\fh{d}}} - \EmpRisk{S}{{\fh{d}}} & \leq \sgcSize{d} 2^{-2 d } \quad a.s.
\]
\end{lemma}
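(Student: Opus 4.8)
The plan is to first establish the uniform risk bound over the class $\sgcHS{d}$ by describing exactly where $\fh{d}\upper{S}$ disagrees with the ground-truth classifier $\fs_d$, and then to deduce the almost-sure generalization bound by combining this with the fact, just established, that $\fh{d}$ is interpolating.

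For the uniform bound I would fix an arbitrary $S \in (\sgcInstances{d})^{\sgcSize{d}}$, let $F(S) \subseteq \features\upper{d}$ denote the set of feature vectors appearing among the samples of $S$ (so $\card{F(S)} \le \sgcSize{d}$), and read off from the definition of $\fh{d}\upper{S}$ that $\fh{d}\upper{S}(x) \ne \fs_d(x)$ precisely when $x \notin F(S)$ and $1-x \in F(S)$. Hence the disagreement set is contained in $\set{1-x : x \in F(S)}$, which has at most $\sgcSize{d}$ elements. Since, under $\sgcDist{d}$, a fresh example is $(X, \fs_d(X))$ with $X$ uniform over the $2^{2d}$ points of $\features\upper{d}$, and the loss is the $0$--$1$ loss,
\*[
  \Risk{\sgcDist{d}}{\fh{d}\upper{S}}
    &= \PPr{X}\sbra{X \notin F(S),\ 1-X \in F(S)} \\
    &\le \PPr{X}\sbra{1-X \in F(S)} \;\le\; \sgcSize{d}\,2^{-2d} .
\]
Because every $h \in \sgcHS{d}$ equals $\fh{d}\upper{S}$ for some such $S$, this gives $\Risk{\sgcDist{d}}{h} \le \sgcSize{d} 2^{-2d}$ uniformly over $\sgcHS{d}$, which is the first claim.

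For the displayed consequence, I would observe that $\fh{d}\upper{S} \in \sgcHS{d}$ for every realization of the random sample $S$, so the pointwise bound yields $\Risk{\Dist}{\fh{d}} \le \sgcSize{d} 2^{-2d}$ almost surely; combining with $\EmpRisk{S}{\fh{d}} = 0$ a.s.\ (the preceding lemma) gives $\Risk{\Dist}{\fh{d}} - \EmpRisk{S}{\fh{d}} \le \sgcSize{d} 2^{-2d}$ a.s.

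I do not expect a substantive obstacle; the argument is elementary. The only point requiring care is the reading of ``$x \in S$'' as membership of $x$ in the feature set $F(S)$ --- equivalently, the observation that $\fh{d}\upper{S}$ depends on $S$ only through its set of feature vectors and not through the labels --- since this is what makes the risk bound genuinely uniform over the whole class $\sgcHS{d}$, rather than only over samples carrying the ``correct'' labels $Y_i = \fs_d(X_i)$.
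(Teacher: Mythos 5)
Your proof is correct and is essentially the paper's own argument: the paper bounds $\Risk{\sgcDist{d}}{\fh{d}^{(S)}}$ by $2^{-2d}\sum_{x}\ind{x\in\bar S_X}\le \sgcSize{d}2^{-2d}$, i.e., the error set lies inside the (at most $\sgcSize{d}$) antipodes of the training features, exactly as you argue, and the displayed consequence follows from the interpolation lemma just as you say. Your remark that the bound holds for arbitrary labels in $S$ (since $\fh{d}^{(S)}$ depends only on the feature set) is a nice explicit note of something the paper's one-line computation uses implicitly.
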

The proof appears in \cref{apx:Proofs}.
The following result demonstrates that uniform convergence (of a class containing $\hh$) \emph{does not} explain the risk.
The argument mirrors that of \citet{NK19c}.
\begin{theorem}[$\sgcHS{\cdot}$ is not \SGC]\label{thm:not-SGC} If $\sgcSize{d}\in o(2^{d})$ then $\set{\loss\circ \sgcHS{d}}_{d\in\Nats}$ is \emph{not} $(\sgcDist{\cdot},\sgcSize{(\cdot)})$-\SGC, in fact
\*[
    \EE \sup_{h\in \HS} \abs{\Risk{\Dist}{h} - \EmpRisk{S}{h}}
      & = 1 - O(n^2 2^{-2 d }).
\]
\end{theorem}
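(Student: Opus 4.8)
The plan is to run the argument of \citet{NK19c}: for a ``typical'' training sample $S$ we will exhibit a single hypothesis $g\in\sgcHS{d}$ that has empirical risk $1$ on $S$ but true risk at most $\sgcSize{d}2^{-2d}$, which pins the supremum inside the expectation near $1$. The right choice of $g$ is the classifier our own algorithm would produce on the \emph{antipodal} sample, $g \defn \fh{d}\upper{\bar S}$ (which lies in $\sgcHS{d}$ since $\bar S\in(\sgcInstances{d})^{\sgcSize{d}}$). Intuitively, $g$ interpolates $\bar S$ and therefore misclassifies every antipode of a point of $\bar S$, i.e.\ every point of $S$, provided the sample has no internal collisions.

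First I would introduce the ``collision-free'' event $E_d$ that the feature vectors $X_1,\dots,X_{\sgcSize{d}}$ contain no repeated pair and no antipodal pair. Since $\Pr(X_i=X_j)=\Pr(X_i=1-X_j)=2^{-2d}$ for $i\neq j$, a union bound over the $\binom{\sgcSize{d}}{2}$ pairs gives $\Pr(E_d^c)\le \sgcSize{d}(\sgcSize{d}-1)2^{-2d}$. On $E_d$, the feature set of $\bar S$ being $\set{1-X_j}_j$, we have for each $i$ that $X_i\notin \bar S$ (no antipodal pair) while $1-X_i\in\bar S$ (witnessed by $j=i$), so the definition of $\fh{d}$ yields $g(X_i)=1-\fs_d(X_i)\neq \fs_d(X_i)=Y_i$. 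Hence $\EmpRisk{S}{g}=1$ on $E_d$. On the other hand, \cref{lem:hypercube-small-risk} applies to every member of $\sgcHS{d}$, so $\Risk{\sgcDist{d}}{g}\le \sgcSize{d}2^{-2d}$ unconditionally.

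Assembling the pieces: on $E_d$ we get $\sup_{h\in\sgcHS{d}}\abs{\Risk{\sgcDist{d}}{h}-\EmpRisk{S}{h}}\ge \abs{\EmpRisk{S}{g}-\Risk{\sgcDist{d}}{g}}\ge 1-\sgcSize{d}2^{-2d}$, and taking expectations, $\EE\sup_{h\in\sgcHS{d}}\abs{\Risk{\sgcDist{d}}{h}-\EmpRisk{S}{h}}\ge \Pr(E_d)\rbra{1-\sgcSize{d}2^{-2d}}\ge \rbra{1-\sgcSize{d}(\sgcSize{d}-1)2^{-2d}}\rbra{1-\sgcSize{d}2^{-2d}} = 1-O(\sgcSize{d}^2 2^{-2d})$. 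Since $\loss$ is the $0$--$1$ loss the same supremum is trivially $\le 1$, giving the claimed equality. Finally, $\sgcSize{d}\in o(2^d)$ forces $\sgcSize{d}^2 2^{-2d}\to 0$, so the quantity tends to $1\neq 0$ and $\set{\loss\circ\sgcHS{d}}_{d}$ is not $(\sgcDist{\cdot},\sgcSize{(\cdot)})$-\SGC.

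There is no genuine obstacle beyond bookkeeping; the only thing that really needs care is the choice of adversarial hypothesis — the classifier trained on the antipodal dataset, rather than on $S$ itself — and the observation that the single event which could spoil the construction (a repeat or an antipodal pair among the $X_i$) has probability only $O(\sgcSize{d}^2 2^{-2d})$, which is exactly the error term in the statement.
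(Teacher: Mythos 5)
Your proposal is correct and follows essentially the same route as the paper: the adversarial hypothesis is the classifier trained on the antipodal sample $\fh{d}\upper{\bar S}$, the spoiling event is an antipodal collision among the $X_i$ bounded by a union bound of order $\sgcSize{d}^2 2^{-2d}$, and \cref{lem:hypercube-small-risk} supplies the risk bound, yielding the same $1-O(\sgcSize{d}^2 2^{-2d})$ lower bound (the upper bound of $1$ being trivial for $0$--$1$ loss). The only cosmetic differences are that you condition on the collision-free event and multiply probabilities (and needlessly also exclude repeated points), whereas the paper subtracts expectations directly using $S \eqdist \bar S$; both give the stated rate.
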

The proof appears in \cref{apx:Proofs}.
In this example, a generalization error bound was tractable because $\Risk\Dist{h}$ was readily bounded for all $h$, despite the fact that uniform convergence failed for $\sgcHS{\cdot}$.
One may then ask ``how many bits of information do we need to forget about our training data in order for the sequence surrogate hypothesis class obtained by conditioning is \sGClong?''

\subsection{Introducing a surrogate classifier}
Let $k_d \le 2 d $.
Let $\pi_{k_d} : \tps^{2 d } \to \tps^{2 d }$ satisfy
\*[
\pi_{k_d}(x_1,\dots,x_{2 d })_j = \begin{cases}
0, & j \le k_d, \\
x_j, & \text{otherwise.}
\end{cases}
\]
That is, $\pi_{k_d}(x)$ zeros out the first ${k_d}$ entries of $x$. Now, let $\pi_{k_d}(S) = (\pi_{k_d}(X_1),\dots,\pi_{k_d}(X_{\sgcSize{d}}))$, $\Gg\upper{d} = \sigma(\pi_{k_d}(S))$, and put $Q(S) = \cPr{\Gg\upper{d} }[\fh{d}^{(S)}]$.
$Q(S)$ is a Gibbs classifier that is learned from the data, but is less coupled with the data than $\fh{d}^{(S)}$. Intuitively, conditioning our learned classifier on $\Gg\upper{d}$ can be interpreted as redrawing the first $k$ features and the labels associated with the training data independently for each new test point, holding the last $2 d -k$ features of each training point fixed.
Since $Q(S)$ is $\sigma(\pi_{k_d}(S))$-measurable, then for distinct datasets $S$ and $S'$ with $\pi_{k_d}(S) = \pi_{k_d}(S')$, we have $Q(S) = Q(S')$.

The map $S\mapsto Q(S)$ is the \emph{surrogate learning algorithm} and $Q(S)$ is the \emph{surrogate classifier}. Note that in this example, our chosen surrogate learning algorithm returns a Gibbs classifier, while the original algorithm returns a deterministic classification rule.
When the argument $S$ of $Q$ is omitted then it is assumed to be the training dataset, $S$.

We first evaluate the risk and empirical risk of our surrogate classifier.

\begin{lemma}[Risk and empirical risk of the surrogate $Q$]\label{lem:Q-risk}
  The following all hold almost surely
  \s*[
    \EmpRisk{S}{Q}
      & \leq \frac{2^{-k_d}(1- 2^{-k_d})}{\sgcSize{d}} \\
      &\qquad\quad \times\card{\set{(i,j) :  X_i[k+1 : 2 d ] = \ol{X_j[k+1:2 d ]}}}\\
    \EmpRisk{\ol S}{Q}
      & \leq \frac{2^{-k}}{\sgcSize{d}} \card{\set{(i,j) : X_i[k+1 : 2 d ] = {X_j[k+1:2 d ]}}} \andT\\
    \Risk{\Dist}{Q}
      & \le \sgcSize{d} 2^{-2 d }.
  \s]
Furthermore,
\*[
\EE \EmpRisk{S}{Q}
  & \leq (\sgcSize{d}-1)2^{-2 d } (1-2^{-k_d}), \\
\EE \EmpRisk{\ol S}{Q}
  & \leq 2^{-k_d}+(\sgcSize{d}-1)2^{-2 d } , \andT \\
\EE \Risk{\Dist}{Q}
    & \le \sgcSize{d} 2^{-2 d }  .
\]
\end{lemma}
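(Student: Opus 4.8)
The plan is to make the surrogate $Q(S)=\cPr{\Gg\upper{d}}[\fh{d}\upper{S}]$ fully explicit through a coupling, and then read each of the six bounds off that description. Since $\sgcHS{d}$ is a finite set of Boolean functions, $Q(S)=\sum_{h\in\sgcHS{d}}\Pr[\fh{d}\upper{S}=h\mid\Gg\upper{d}]\,\delta_h$. First I would introduce $S'=(Z_1',\dots,Z_{\sgcSize{d}}')$, a copy of $S$ that is conditionally i.i.d.\ given $\Gg\upper{d}$: concretely, each $X_i'$ agrees with $X_i$ on coordinates $k_d+1,\dots,2d$ and has its first $k_d$ coordinates drawn uniformly and independently, with $Y_i'=\fs_d(X_i')$. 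Because $S'$ has the same $\Gg\upper{d}$-conditional law as $S$ and is conditionally independent of $S$ given $\Gg\upper{d}$, for every $\sigma(S)$-measurable pair $z=(x,y)$ one gets $\loss(Q(S),z)=\EE[\one\{\fh{d}\upper{S'}(x)\neq y\}\mid S]$; and since $\Risk{\Dist}{Q(S)}=\sum_h\Pr[\fh{d}\upper{S}=h\mid\Gg\upper{d}]\,\Risk{\Dist}{h}$ by linearity of the risk in $Q$, it is at most $\max_{h\in\sgcHS{d}}\Risk{\Dist}{h}\le\sgcSize{d}2^{-2d}$ by \cref{lem:hypercube-small-risk}. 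This is already the third almost-sure bound, and hence (taking expectations) the third bound in expectation.

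Next I would treat $\EmpRisk{S}{Q}=\sgcSize{d}^{-1}\sum_i\loss(Q(S),Z_i)$ via the identity above with $z=Z_i=(X_i,\fs_d(X_i))$. As $\fh{d}\upper{S'}$ disagrees with $\fs_d$ at a point $x$ exactly when $x\notin S'$ and $1-x\in S'$, the relevant event is $\{X_i\notin S'\}\cap\{1-X_i\in S'\}$. Set $J_i=\{j:X_j[k_d+1:2d]=X_i[k_d+1:2d]\}$ and $J_i^c=\{j:X_j[k_d+1:2d]=\ol{X_i[k_d+1:2d]}\}$; these are disjoint when $k_d<2d$, since a block cannot equal its own complement. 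Then $\{X_i\notin S'\}=\bigcap_{j\in J_i}\{X_j'[1:k_d]\neq X_i[1:k_d]\}$ and $\{1-X_i\in S'\}=\bigcup_{j\in J_i^c}\{X_j'[1:k_d]=\ol{X_i[1:k_d]}\}$, which depend on disjoint, and hence conditionally independent, blocks of fresh head-coordinates given $S$, so the conditional probability factors. The first factor equals $(1-2^{-k_d})^{|J_i|}\le 1-2^{-k_d}$ (using $i\in J_i$) and a union bound makes the second at most $|J_i^c|\,2^{-k_d}$, giving $\loss(Q(S),Z_i)\le(1-2^{-k_d})\,|J_i^c|\,2^{-k_d}$; summing and using $\sum_i|J_i^c|=\card{\set{(i,j):X_i[k_d+1:2d]=\ol{X_j[k_d+1:2d]}}}$ proves the first almost-sure bound. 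For $\EmpRisk{\ol S}{Q}$ I would apply the identity with $z=\ol{Z_i}=(1-X_i,\,1-\fs_d(X_i))$ and the antipodal identity $1-\fs_d(X_i)=\fs_d(1-X_i)$ noted in the construction; then $\fh{d}\upper{S'}(1-X_i)\neq\fs_d(1-X_i)$ iff $1-X_i\notin S'$ and $X_i\in S'$, so $\loss(Q(S),\ol{Z_i})\le\Pr[X_i\in S'\mid S]\le|J_i|\,2^{-k_d}$ by a union bound, and summing gives the second almost-sure bound.

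The remaining two bounds in expectation follow by taking expectations of the almost-sure bounds and evaluating the two counting statistics. Because $X_1,\dots,X_{\sgcSize{d}}$ are i.i.d.\ uniform, the tail blocks $X_i[k_d+1:2d]$ are i.i.d.\ uniform on $\set{0,1}^{2d-k_d}$, so for $i\neq j$ both $\Pr[X_i[k_d+1:2d]=X_j[k_d+1:2d]]$ and $\Pr[X_i[k_d+1:2d]=\ol{X_j[k_d+1:2d]}]$ equal $2^{-(2d-k_d)}$, while on the diagonal the ``equal'' event has probability $1$ and (for $k_d<2d$) the ``complement'' event has probability $0$. Hence $\EE\,\card{\set{(i,j):X_i[k_d+1:2d]=\ol{X_j[k_d+1:2d]}}}=\sgcSize{d}(\sgcSize{d}-1)2^{-(2d-k_d)}$ and $\EE\,\card{\set{(i,j):X_i[k_d+1:2d]=X_j[k_d+1:2d]}}=\sgcSize{d}+\sgcSize{d}(\sgcSize{d}-1)2^{-(2d-k_d)}$, and substituting these into the almost-sure bounds yields $(\sgcSize{d}-1)(1-2^{-k_d})2^{-2d}$ and $2^{-k_d}+(\sgcSize{d}-1)2^{-2d}$, as claimed.

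The hard part will be the first paragraph: carrying out the conditioning carefully enough that $\loss(Q(S),Z_i)$ genuinely equals the stated $S$-conditional probability even though $Z_i$ is a training point and so is not $\Gg\upper{d}$-measurable---this is exactly what the conditionally i.i.d.\ copy $S'$ is for---and then, in the second paragraph, the bookkeeping that $J_i$ and $J_i^c$ are disjoint, which is what makes the events $\{X_i\notin S'\}$ and $\{1-X_i\in S'\}$ conditionally independent so that the factor $1-2^{-k_d}$ can be extracted rather than merely discarded. The degenerate case $k_d=2d$, where $\Gg\upper{d}$ is trivial and the diagonal no longer vanishes in the ``complement'' count, should be excluded or checked separately.
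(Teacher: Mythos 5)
Your proof is correct and takes essentially the same route as the paper's: you realize the conditional law $Q(S)$ as re-randomization of the first $k_d$ coordinates (via a conditionally i.i.d.\ coupled copy $S'$, where the paper instead averages explicitly over all head configurations $V\in(\{0,1\}^{k_d})^n$), bound the misclassification events through the indices with matching or antipodal tails using union bounds to extract the $2^{-k_d}(1-2^{-k_d})$ and $2^{-k_d}$ factors, and finish by computing the expected pair counts exactly as the paper does. Your observation that the diagonal terms vanish only when $k_d<2d$ matches the assumption the paper uses implicitly in its expectation step.
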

The proof appears in \cref{apx:Proofs}.
As was foreshadowed in \cref{sec:decomp}, we have increased the empirical risk by replacing $\fh{d}$ with $Q$.
However, at the same time, we have dramatically lowered the empirical risk on the adversarial (antipodal) dataset, and not affected the true risk at all.
In fact we are able to trade off empirical risk on the training data with worst case risk on an adversarial dataset explicitly by varying the parameter $k_d$.
Even a small amount of re-randomization in the surrogate ($k_d$ small) can yield very tight control on the adversarial empirical risk.
In this example, the adversarial empirical risk decreases exponentially fast in the number of bits of information lost per example.
This allows us to demonstrate that the sequence of surrogate hypothesis classes is \sGClong{}.

\newcommand{\sgcSHS}[1]{\Ii\upper{d}}

\begin{lemma}[The surrogate is \SGC{}]\label{lem:Q-sgc}
  Consider the sequence of surrogate hypothesis classes given by $\sgcSHS{d} = \set[0]{Q(S): S \in\sgcInstances{d}} $.
  If $\sgcSize{d}\in o(2^{k_d}/\sqrt{d})$ then $\set[0]{\loss\circ \sgcSHS{d}}_{d\in\Nats}$ is $(\sgcDist{\cdot},\sgcSize{(\cdot)})$-\SGC.
  In particular, when $\sgcSize{d}\in o(2^{2 d })$ and $k_d \geq \ceil[0]{(1+\epsilon)\log_2(\sgcSize{d})+\log_2(d)/2}$ then we have $\set[0]{\loss\circ \sgcSHS{d}}_{d\in\Nats}$ is $(\sgcDist{\cdot},\sgcSize{(\cdot)})$-\SGC.
  If $\log(\sgcSize{d}) \in \Omega(d)$ this is only possible if $k_d \in \Omega(d)$.
  If $\log(\sgcSize{d}) \in o(d)$ then this is possible even when $k_d \in o(d)$
\end{lemma}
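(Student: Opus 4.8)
The plan is to bound the \sGCadj{} GC functional for $\loss\circ\sgcSHS d$ head-on --- viewing the surrogate class as a very coarse ``cover'' and controlling its supremum by brute force rather than through a covering number. Let $S=(Z_1,\dots,Z_{\sgcSize d})$ with $Z_i=(X_i,Y_i)$ be the i.i.d.\ sample from $\sgcDist d$ realizing $\widehat{\sgcDist d}_{\sgcSize d}$ (so $Y_i=\fs_d(X_i)$), and let $S'$ range over training samples of size $\sgcSize d$, so that a generic member of $\loss\circ\sgcSHS d$ is $\loss(Q(S'),\argdot)$, with population and empirical averages $\Risk{\sgcDist d}{Q(S')}$ and $\EmpRisk S{Q(S')}$. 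Since $\loss$ is $\set[0]{0,1}$-valued, for every $S'$ we have $\abs{\Risk{\sgcDist d}{Q(S')}-\EmpRisk S{Q(S')}}\le\max\rbra{\Risk{\sgcDist d}{Q(S')},\,\EmpRisk S{Q(S')}}$, so it suffices to bound $\sup_{S'}\Risk{\sgcDist d}{Q(S')}$ (deterministic) and $\EE\sbra{\sup_{S'}\EmpRisk S{Q(S')}}$ separately. For the first: every $Q(S')$ is supported on $\sgcHS d$, and $\Risk{\sgcDist d}{h}\le\sgcSize d 2^{-2d}$ for all $h\in\sgcHS d$ by \cref{lem:hypercube-small-risk}, so $\Risk{\sgcDist d}{Q(S')}=\int\Risk{\sgcDist d}{h}\,Q(S')(\dee h)\le\sgcSize d 2^{-2d}$.

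For the empirical term, fix $S'$, write $u_i\in\set[0]{0,1}^{2d-k_d}$ for the coordinates $k_d+1,\dots,2d$ of $X_i$, and $w_j$ for those of the $j$-th feature vector of $S'$. Because the sample space is finite, $Q(S')=\cPr{\Gg\upper d}[\fh d\upper S]$ evaluated at $S'$ is exactly the law of $\fh d\upper{S''}$, where $S''$ is obtained from $S'$ by resampling coordinates $1,\dots,k_d$ of each feature vector independently and uniformly; and the case definition of $\fh d$ shows $\fh d\upper{S''}(X_i)\neq\fs_d(X_i)=Y_i$ forces $1-X_i$ to occur among the features of $S''$. A union bound over the $\sgcSize d$ feature indices of $S''$, using $\Pr[X_j''=1-X_i]=2^{-k_d}\one[w_j=\ol{u_i}]$, yields
\[
  \EmpRisk S{Q(S')}=\frac1{\sgcSize d}\sum_{i}\PPr{h\dist Q(S')}[h(X_i)\neq Y_i]\le\frac{2^{-k_d}}{\sgcSize d}\sum_{i,j}\one[w_j=\ol{u_i}]
\]
(the same computation behind \cref{lem:Q-risk}, now for an arbitrary $S'$ and a fresh sample $S$). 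The right-hand side depends on $S'$ only through the multiset $\set[0]{w_1,\dots,w_{\sgcSize d}}$ and is maximized by placing every $w_j$ on the complement of the most frequent value among $u_1,\dots,u_{\sgcSize d}$; hence $\sup_{S'}\EmpRisk S{Q(S')}\le2^{-k_d}M$, where $M=\max_w\card{\set{i\le\sgcSize d:u_i=w}}$ is the maximum occupancy when $\sgcSize d$ balls are dropped uniformly into $m\defas2^{2d-k_d}$ bins.

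Next I would bound $\EE M$ by a collision count: with $\Phi=\card{\set{(i,j):i<j,\ u_i=u_j}}$, the fullest bin alone contributes $\binom M2\le\Phi$ collisions, so $M\le1+\sqrt{2\Phi}$, and Jensen together with $\EE\Phi=\binom{\sgcSize d}{2}m^{-1}$ gives $\EE M\le1+\sgcSize d\,2^{(k_d-2d)/2}$. Assembling the three contributions,
\[
  \EE\sbra{\sup_{h\in\loss\circ\sgcSHS d}\abs{\sgcDist d h-\widehat{\sgcDist d}_{\sgcSize d}h}}\le\sgcSize d 2^{-2d}+2^{-k_d}+\sgcSize d\,2^{-d-k_d/2}.
\]
When $\sgcSize d\in o(2^{k_d}/\sqrt d)$ each term is $o(1)$: the first since then $\sgcSize d\in o(2^{2d})$ (using $k_d\le2d$); the second since $\sgcSize d\ge1$ forces $2^{k_d}/\sqrt d\to\infty$, hence $k_d\to\infty$; and the third since $\sgcSize d\,2^{-d-k_d/2}\in o\rbra{2^{k_d/2-d}/\sqrt d}\subseteq o(1/\sqrt d)$, again by $k_d\le2d$. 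This gives that $\set[0]{\loss\circ\sgcSHS d}_{d\in\Nats}$ is $(\sgcDist\cdot,\sgcSize{(\cdot)})$-\SGC. For the ``in particular'' clause, substituting $k_d\ge\ceil[0]{(1+\epsilon)\log_2\sgcSize d+\tfrac12\log_2 d}$ gives $\sgcSize d\sqrt d\,2^{-k_d}\le\sgcSize d^{-\epsilon}\to0$, so the hypothesis of the first clause holds; and the final two sentences are read off directly from this lower bound on $k_d$ together with $k_d\le2d$.

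The step I expect to be the main obstacle is the balls-and-bins estimate for $\EE M$: the trivial bound $M\le\sgcSize d$ only delivers the weaker hypothesis $\sgcSize d\in o(2^{k_d})$, so one genuinely needs the collision-counting improvement $\EE M\le1+\sgcSize d\,2^{(k_d-2d)/2}$ to recover the extra factor $\sqrt d$, which is in effect ``spent'' in the regime where $k_d$ approaches $2d$ and the bins become few. A smaller but real point is the identification of $\cPr{\Gg\upper d}[\fh d\upper S]$ at a fixed $S'$ with ``$\fh d$ applied to $S'$ after refreshing its first $k_d$ feature bits uniformly''; this is routine precisely because the sample space is finite, so $\set{\pi_{k_d}(S)=\pi_{k_d}(S')}$ has positive probability and the conditional law of the refreshed bits is product-uniform.
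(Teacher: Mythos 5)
Your proof is correct, but it follows a genuinely different route from the paper's. The paper symmetrizes and then applies Massart's finite-class lemma: since $Q(S')$ depends on $S'$ only through $\pi_{k_d}(S')$, the class of loss vectors on the sample has cardinality at most $2\cdot 2^{(2d-k_d)\sgcSize{d}}$, and each coordinate is at most $\sgcSize{d}2^{-k_d}$ (by the argument of \cref{lem:Q-risk}), giving the Rademacher bound $\sqrt{\log 2}\,\sgcSize{d}^{1/2}((2d-k_d)\sgcSize{d}+1)^{1/2}2^{-k_d}$ and hence \SGC{} under $\sgcSize{d}\in o(2^{k_d}/\sqrt{d})$. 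You instead bound the supremum directly, splitting $\abs{\Risk{\Dist}{Q(S')}-\EmpRisk{S}{Q(S')}}$ into a deterministic uniform risk bound $\sgcSize{d}2^{-2d}$ (via \cref{lem:hypercube-small-risk}) and a worst-case empirical-risk term, which you control combinatorially: the adversary's best choice of $S'$ stacks all antipodal tails on the most frequent value of the retained coordinates of the fresh sample, so $\sup_{S'}\EmpRisk{S}{Q(S')}\le 2^{-k_d}M$ with $M$ a balls-in-bins maximum occupancy, and the collision/Jensen estimate gives $\EE M\le 1+\sgcSize{d}2^{(k_d-2d)/2}$. Your identification of $Q(S')$ as the law of $\fh{d}$ retrained on $S'$ with the first $k_d$ feature bits refreshed is exactly the paper's interpretation of the conditioning and is rigorous in this finite setting. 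The resulting bound $\sgcSize{d}2^{-2d}+2^{-k_d}+\sgcSize{d}2^{-d-k_d/2}$ differs from the paper's and is sharper in many regimes (no $\sqrt{d}$ or $\sgcSize{d}^{3/2}$ factors), while still yielding the stated sufficient condition; what the paper's Rademacher route buys is a one-line reduction to standard tools and the exact quantitative form reused in \cref{thm:example1-main-bound}.

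Two small remarks. First, your closing worry is inverted: the trivial bound $M\le\sgcSize{d}$ already proves the lemma as stated, since it gives \SGC{} whenever $\sgcSize{d}\in o(2^{k_d})$, a weaker hypothesis than $\sgcSize{d}\in o(2^{k_d}/\sqrt{d})$; the collision refinement is thus not needed for the statement, it only improves the bound. Second, your verification of the ``in particular'' clause (via $\sgcSize{d}\sqrt{d}\,2^{-k_d}\le \sgcSize{d}^{-\epsilon}\to 0$) implicitly assumes $\sgcSize{d}\to\infty$; the paper's appendix proof does not treat that clause at all, so this is a shared, and harmless, looseness rather than a gap in your argument.
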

The proof appears in \cref{apx:Proofs}. Note that the restrictions upon $k_d$ we provide may be a product of our particular approach to bounding the Rademacher complexity (Massart's Lemma). A more refined approach may yield looser restrictions on $k_d$.
Since the surrogate behaves similarly on training and test data to the original learning algorithm, and since the sequence of classes of achievable surrogates is \SGC, we can establish a generalization error bound for the original learning algorithm using the uniform convergence of the surrogate.

\begin{theorem}[Bounding generalization error via an \SGC{} surrogate]
  \label{thm:example1-main-bound}
We have the following bound on the generalization error:
  \s*[
    & \EE[\Risk{\Dist}{\fh{d}} - \EmpRisk{S}{\fh{d}}] \\
    &\quad \leq (\sgcSize{d}-1)2^{-2 d } + 2\sqrt{\log(2)}\sgcSize{d}^{1/2}((2 d -k_d)\sgcSize{d}+1)^{1/2}2^{-k_d} .
  \s]
If $\sgcSize{d}\in o\rbra{2^{2 d }}$, then for any choice of $\set{k_d}_{d\in\NN}$ with  $\lim_{d\to\infty}\rbra{k_d-\log_2(\sgcSize{d})-\log_2(d)/2}=\infty$, our surrogate witnesses that the generalization error vanishes
\*[\EE[\Risk{\Dist}{\fh{d}} - \EmpRisk{S}{\fh{d}}] \to 0 . \]
\end{theorem}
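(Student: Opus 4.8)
The plan is to feed the two quantitative lemmas about the surrogate $Q$ into the conditioning-based surrogate decomposition. Since $\fh{d}$ interpolates the training data, so that $\EE[\EmpRisk{S}{\fh{d}}] = 0$, and the loss is bounded (so the relevant expectations are finite), \cref{surrogatecond} applies and gives
\[
\EE[\Risk{\Dist}{\fh{d}} - \EmpRisk{S}{\fh{d}}] = \EE[\EmpRisk{S}{Q}] + \EE[\Risk{\Dist}{Q} - \EmpRisk{S}{Q}].
\]
The first term on the right is at most $(\sgcSize{d}-1)2^{-2d}$ by the empirical-risk bound of \cref{lem:Q-risk} (discarding the harmless factor $1 - 2^{-k_d} \le 1$); this is exactly the first summand of the claimed inequality.

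To control $\EE[\Risk{\Dist}{Q} - \EmpRisk{S}{Q}]$ I would invoke \cref{supbound} with the nonrandom surrogate class $G = \sgcSHS{d}$. Since $Q(S) \in G$ by construction, $\Pr[Q \notin G] = 0$, so
\[
\EE[\Risk{\Dist}{Q} - \EmpRisk{S}{Q}] \le \EE\sbra{\sup_{P \in G}\abs{\Risk{\Dist}{P} - \EmpRisk{S}{P}}},
\]
which is precisely the uniform deviation that \cref{lem:Q-sgc} shows tends to $0$; here I would instead extract the explicit bound from its proof. The three ingredients are: (i) symmetrization, reducing the right-hand side to a constant multiple of a Rademacher average over the finite set of loss vectors $\{(\loss(P,Z_1),\dots,\loss(P,Z_{\sgcSize{d}})) : P \in G\} \subseteq [0,1]^{\sgcSize{d}}$; (ii) the cardinality bound $\abs{G} \le 2^{(2d - k_d)\sgcSize{d}}$, valid because $Q(S)$ depends on $S$ only through the $\sgcSize{d}$ truncated features $\pi_{k_d}(X_1),\dots,\pi_{k_d}(X_{\sgcSize{d}})$, each lying in $\tps^{2d - k_d}$; and (iii) a bound of order $\sgcSize{d}^{3/2}2^{-k_d}$ on the $\ell_2$-radius of those loss vectors, using that for each $i$ the coordinate $\loss(P, Z_i)$ is at most $2^{-k_d}$ times the number of training points sharing the last $2d - k_d$ coordinates with $1 - X_i$, hence at most $\sgcSize{d}\,2^{-k_d}$. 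Inserting (ii) and (iii) into Massart's finite-class lemma (the $+1$ under the square root coming from the $\log(2\abs{G})$ of the symmetrized bound) yields the second summand $2\sqrt{\log 2}\,\sgcSize{d}^{1/2}((2d - k_d)\sgcSize{d} + 1)^{1/2}2^{-k_d}$, completing the inequality.

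For the limiting claim, if $\sgcSize{d} \in o(2^{2d})$ then $(\sgcSize{d}-1)2^{-2d} \le \sgcSize{d}2^{-2d} \to 0$. For the second summand, $k_d \le 2d$ and $d \ge 1$ give $(2d - k_d)\sgcSize{d} + 1 \le 3d\,\sgcSize{d}$, so it is bounded by a universal constant times $\sqrt{d}\,\sgcSize{d}\,2^{-k_d} = 2^{\log_2 \sgcSize{d} + \frac12 \log_2 d - k_d}$, and the hypothesis $k_d - \log_2 \sgcSize{d} - \frac12 \log_2 d \to \infty$ forces this exponent to $-\infty$, hence the summand vanishes as well.

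The main obstacle is ingredient (iii): a crude bound $\abs{\loss(P,Z_i)} \le 1$ gives only an $O(\sqrt{\sgcSize{d}})$ radius and, more importantly, loses the decisive factor $2^{-k_d}$, which is precisely what drives the second term to $0$ even when $k_d$ is much smaller than $d$. Recovering it requires exploiting the structure of the conditional law defining $Q$: given $\pi_{k_d}(S)$, each re-randomized training feature is uniform over a coset of $2^{k_d}$ points, so the Gibbs classifier $Q(S')$ can misclassify a fixed $X_i$ only when some re-randomized point lands on the unique antipode $1 - X_i$, an event of probability $2^{-k_d}$ per candidate index. Once this per-coordinate control is in hand, the symmetrization constant, the measurability of the suprema, and the elementary arithmetic of the limit are all routine.
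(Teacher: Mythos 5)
Your proposal is correct and follows essentially the same route as the paper: the conditioning-based surrogate decomposition (\cref{surrogatecond}) with interpolation, the empirical-risk bound from \cref{lem:Q-risk}, and \cref{supbound} with $G=\sgcSHS{d}$ (so $\Pr[Q\notin G]=0$), with the uniform term controlled exactly as in the paper's proof of \cref{lem:Q-sgc} via symmetrization, the cardinality bound $2^{(2d-k_d)\sgcSize{d}}$, the per-coordinate loss bound $\sgcSize{d}2^{-k_d}$, and Massart's lemma. The concluding limit arithmetic also matches what the theorem asserts.
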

The proof appears in \cref{apx:Proofs}
\subsection{Interpreting the derandomization bound}
We visualize the dependence of our generalization error bound on $k$, $n$ and $d$ in \cref{fig:example1-bounds-plot1}. Notice that the bound decays very rapidly in the proportion of randomness removed by conditioning, $k/2 d $. When the learning problem is sufficiently complex, even for larger sample size, a small proportion of derandomization leads to a strong control of the generalization error.
\begin{figure}[!t]
  \includegraphics[width=.6\textwidth]{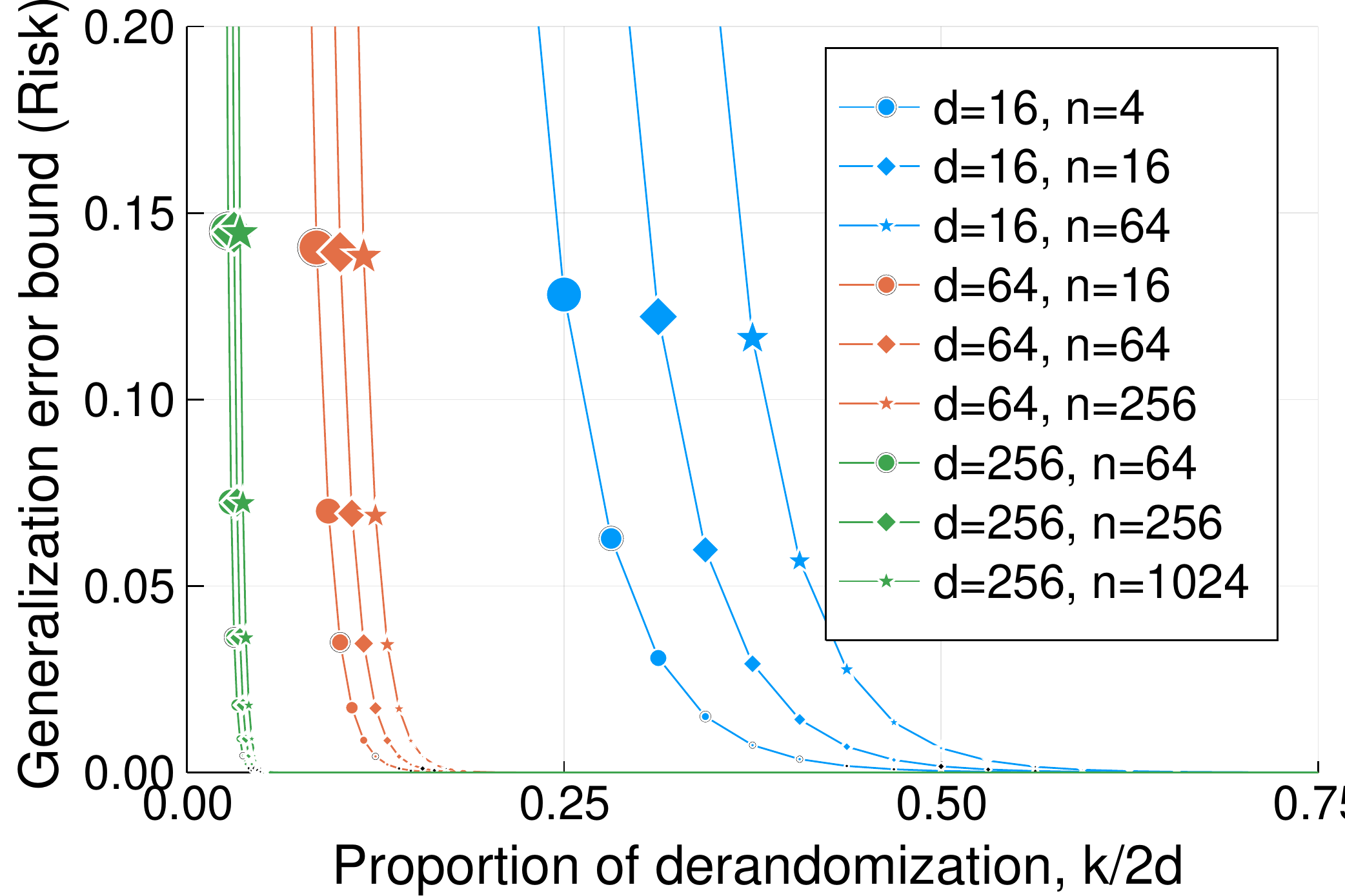}
  \caption{Visualizing the bound of \cref{thm:example1-main-bound}.}
  \label{fig:example1-bounds-plot1}
\end{figure}

\subsubsection{Relationship with Double Descent}
The bound produced by derandomization also exhibits a form of double descent.
The rising left branch of each curve in \cref{fig:example1-doubledescent-plot1} is a bound on the generalization error based on uniform convergence of a VC-class containing the learned predictor, where the VC dimension is bounded by $\min(n,2 d -1)$.
For $d\ll n$ this gives nonvacuous bounds for the low dimensional setting.
Since the classifier is interpolating almost surely, even in the low complexity setting (low dimension $d$) the first descent begins at $0$ and only shows the rising component as complexity increases.
The right branch of each curve in \cref{fig:example1-doubledescent-plot1}, based on the bound of \cref{thm:example1-main-bound}, shows the second descent in the high dimensional setting --- for sufficiently complex models one may bound the generalization error of learned classifier via uniform convergence of a suitable derandomized classifier. \emph{The bound obtained via derandomization is nonvacuous in the region of the second descent, exactly where standard uniform convergence techniques based on VC theory would give vacuous bounds.}
\begin{figure}[!t]
  \includegraphics[width=.6\textwidth]{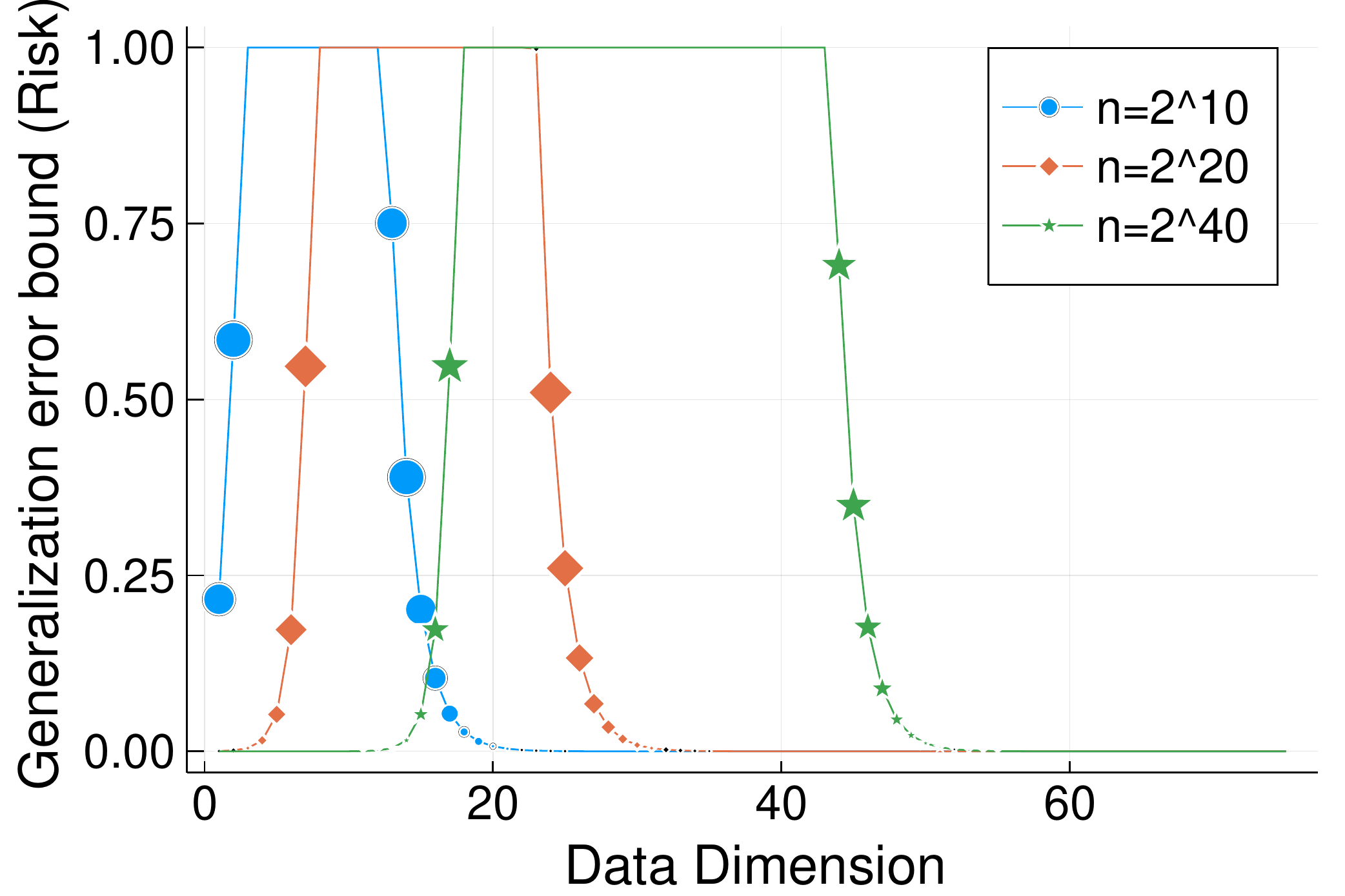}
  \caption{Double descent in the bound of \cref{thm:example1-main-bound}.}
  \label{fig:example1-doubledescent-plot1}
\end{figure}

\printbibliography

\appendix

\section{Proofs for hypercube classifier}
\label{apx:Proofs}
\begin{proof}[Proof of \cref{lem:hypercube-small-risk}]
  \*[
    \Risk{\sgcDist{d}}{\fh{d}^{(S)}}
      & \leq \frac{1}{2^{2 d }} \sum_{x \in \set{0,1}^{2 d }} \ind{x \in \bar S_X}
      \leq \sgcSize{d} 2^{-2 d } .
  \]
\end{proof}

\begin{proof}[Proof of \cref{thm:not-SGC}]
We require the following claims and lemmas.
  \begin{claim}
  $S \eqdist \bar S$.
  \end{claim}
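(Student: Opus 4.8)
The plan is to deduce the claim from the corresponding one-coordinate identity together with the i.i.d.\ structure of the sample. First I would observe that $\bar Z_i = (1 - X_i,\, 1 - Y_i)$ is a fixed deterministic function of $Z_i = (X_i, Y_i)$; moreover, since $Y_i = \fs_d(X_i)$ and the antipodal identity $\fs_d(x) = 1 - \fs_d(1-x)$ (recorded immediately after the definition of $\fs_d$) gives $1 - Y_i = \fs_d(1 - X_i)$, we in fact have $\bar Z_i = \bigl(1 - X_i,\ \fs_d(1 - X_i)\bigr)$, a function of $X_i$ alone. Write $g$ for this map, so that $\bar Z_i = g(Z_i)$ for every $i$, with the \emph{same} $g$.

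Next I would show that $\bar Z_1 \eqdist Z_1$. The complementation map $x \mapsto 1 - x$ is a bijection of the finite set $\features\upper{d} = \{0,1\}^{2d}$, hence preserves the uniform distribution, so $1 - X_1 \sim \unifdist\bigl(\features\upper{d}\bigr)$. Consequently $\bar Z_1 = \bigl(1 - X_1,\ \fs_d(1 - X_1)\bigr)$ has the same law as $\bigl(X_1,\ \fs_d(X_1)\bigr) = Z_1$, namely $\sgcDist{d}$. Then, since $Z_1,\dots,Z_n$ are i.i.d.\ and $\bar Z_i = g(Z_i)$ with the same $g$ for each $i$, the variables $\bar Z_1,\dots,\bar Z_n$ are i.i.d.\ with common law that of $g(Z_1) = \bar Z_1$, which we have just identified as $\sgcDist{d}$. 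Therefore $\bar S = (\bar Z_1,\dots,\bar Z_n)$ and $S = (Z_1,\dots,Z_n)$ are both i.i.d.\ samples of size $n$ from $\sgcDist{d}$, and so $S \eqdist \bar S$.

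There is no real obstacle here: the only substantive inputs are the antipodal symmetry of $\fs_d$ and the invariance of the uniform measure on the hypercube under complementation, and the i.i.d.\ structure propagates the single-coordinate identity to the whole sample. The claim is essentially a bookkeeping step, used afterwards to symmetrize the argument over $S$ and $\bar S$ in the proof of \cref{thm:not-SGC}.
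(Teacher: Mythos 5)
Your argument is correct and is precisely the justification the paper leaves implicit: the claim is stated there without proof, and the intended reasoning is exactly yours --- complementation is a measure-preserving bijection of the uniform law on $\{0,1\}^{2d}$, the recorded antipodal identity transports the label so that $\bar Z_i = \bigl(1-X_i,\ f^*_d(1-X_i)\bigr)$, and the i.i.d.\ structure lifts the one-coordinate equality in distribution to the whole sample. The only caveat is inherited from the paper rather than introduced by you: the identity $f^*_d(x) = 1 - f^*_d(1-x)$ actually fails on the middle slice $\norm{x}_1 = d$ (there both $f^*_d(x)$ and $f^*_d(1-x)$ equal $1$), so the claim as literally stated only holds up to that boundary event; given the paper's asserted identity as a premise, your derivation is complete and matches the intended route.
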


  \begin{lemma}
  Let $E$ be the event that $\exists i,\ \exists j, \  X_i=1-X_j$.
  Then $\Pr E \leq \binom{n}{2} 2^{-2 d }$, so $\Pr E \to 0$ as $d \to \infty$ as long as $\sgcSize{d}\in o(2^d)$.
  \end{lemma}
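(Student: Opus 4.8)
The plan is to bound $\Pr E$ by a union bound over pairs of indices, since the event is a finite union of ``collision'' events each of which has a tiny, easily computed probability.

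First I would dispense with the diagonal: no point $x\in\{0,1\}^{2d}$ satisfies $x = 1-x$ (coordinatewise this would require $x_\ell = 1-x_\ell$), so the case $i=j$ contributes nothing and $E = \bigcup_{i\neq j}\event{X_i = 1-X_j}$. Moreover $\event{X_i = 1-X_j}$ and $\event{X_j = 1-X_i}$ are the same event, so it suffices to range over the $\binom{\sgcSize{d}}{2}$ unordered pairs $\{i,j\}$.

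Next, for a fixed pair $i\neq j$, I would compute $\Pr\event{X_i = 1-X_j}$ by conditioning on $X_j$: since $X_i$ is independent of $X_j$ and uniform on $\{0,1\}^{2d}$, for any fixed value $x_j$ we have $\Pr(X_i = 1-x_j \mid X_j = x_j) = 2^{-2d}$, and averaging over $x_j$ gives $\Pr\event{X_i = 1-X_j} = 2^{-2d}$. The union bound then yields $\Pr E \le \binom{\sgcSize{d}}{2}\,2^{-2d}$, as claimed. For the asymptotic statement I would crudely bound $\binom{\sgcSize{d}}{2}2^{-2d} \le \tfrac12\,\sgcSize{d}^2\,2^{-2d} = \tfrac12\bigl(\sgcSize{d}\,2^{-d}\bigr)^2$; when $\sgcSize{d}\in o(2^d)$ the quantity $\sgcSize{d}\,2^{-d}$ vanishes, hence $\Pr E\to 0$.

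There is no genuine obstacle here — this is an elementary first-moment/union-bound estimate — so the only thing to watch is the bookkeeping: excluding $i=j$, and not double-counting by treating $\event{X_i = 1-X_j}$ and $\event{X_j = 1-X_i}$ as distinct pairs (doing so would give the slightly weaker bound $\sgcSize{d}(\sgcSize{d}-1)\,2^{-2d}$, which is still more than enough for the limit).
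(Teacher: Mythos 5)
Your proof is correct and follows essentially the same route as the paper: a union bound over pairs $\{i,j\}$ together with the per-pair collision probability $\Pr(X_i = 1-X_j) = 2^{-2d}$, giving $\binom{\sgcSize{d}}{2}2^{-2d}$ and hence the limit when $\sgcSize{d}\in o(2^d)$. Your explicit handling of the diagonal case and the unordered-pair counting is just a more careful writing of the bookkeeping the paper leaves implicit.
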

  \begin{proof}
    \*[\Pr E
      & = \Pr(\exists i,\ \exists j, \  X_i=1-X_j)
       \leq \sum_{i\neq j } \Pr(X_i=1-X_j)
       = \binom{\sgcSize{d}}{2} 2^{-2 d }
    \]
  \end{proof}

  \begin{lemma}
  $\EmpRisk{\bar S}{\fh{d}} = 1$ on $E^c$.
\end{lemma}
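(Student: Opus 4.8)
The plan is to compute $\EmpRisk{\bar S}{\fh{d}}$ directly from the definitions and show that on the event $E^c$ the classifier $\fh{d}\upper{S}$ misclassifies \emph{every} antipodal example $\bar Z_i = (1-X_i,\,1-Y_i)$. Since the loss here is the $0$--$1$ loss, $\loss(h,(x,y)) = \ind{h(x)\neq y}$, we have $\EmpRisk{\bar S}{\fh{d}} = \frac{1}{n}\sum_{i=1}^{n}\ind{\fh{d}\upper{S}(1-X_i)\neq 1-Y_i}$, so it suffices to prove $\fh{d}\upper{S}(1-X_i) = Y_i$ for every $i$ on $E^c$; since $Y_i\in\{0,1\}$ this forces $Y_i\neq 1-Y_i$ and hence makes every summand equal to $1$.

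Fix $i$ and evaluate $\fh{d}\upper{S}(1-X_i)$ using the two-case definition of $\fh{d}\upper{S}$ at the point $x = 1-X_i$. The ``otherwise'' branch is governed by whether $1-x = X_i$ lies in the training feature set $\{X_1,\dots,X_n\}$, which it does, trivially; the first branch additionally requires $x = 1-X_i$ to be \emph{absent} from $\{X_1,\dots,X_n\}$. Now if $1-X_i = X_j$ for some $j\in\{1,\dots,n\}$, then the ordered pair $(i,j)$ witnesses $E$ --- here the case $j=i$ is vacuous, since $x\mapsto 1-x$ has no fixed point on the Boolean cube --- so on $E^c$ we indeed have $1-X_i\notin\{X_1,\dots,X_n\}$. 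Thus $x = 1-X_i$ falls in the first branch, giving $\fh{d}\upper{S}(1-X_i) = 1 - \fs_d(1-X_i)$, and the antipodal symmetry $\fs_d(\cdot) = 1 - \fs_d(1-\cdot)$ recorded in the construction yields $1 - \fs_d(1-X_i) = \fs_d(X_i) = Y_i$. Summing $\ind{\fh{d}\upper{S}(1-X_i)\neq 1-Y_i} = \ind{Y_i\neq 1-Y_i} = 1$ over $i$ gives $\EmpRisk{\bar S}{\fh{d}} = 1$ on $E^c$.

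I expect no genuine obstacle here: the argument is a short unpacking of the definitions of $\fh{d}\upper{S}$ and of $E$. The only points needing a moment's care are the trivial edge case $j=i$ in the definition of $E$ (dispatched by fixed-point-freeness of antipodation on $\{0,1\}^{2d}$) and the observation that possible repetitions among the training features $X_i$ are harmless, since each summand of the empirical risk is shown to equal $1$ individually rather than via any counting over distinct feature values.
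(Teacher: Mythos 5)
Your argument is correct and is essentially the paper's own proof: the paper's one-line computation $\EmpRisk{\bar S}{\fh{d}} = \frac{1}{n}\sum_i \ind{\bar X_i \in \bar S_X}\ind{\bar X_i \notin S_X} \ge \ind{E^c}$ is exactly your pointwise unpacking that, on $E^c$, each $\bar X_i$ falls in the first branch of $\fh{d}^{(S)}$ and is therefore misclassified, with the same handling of the (vacuous) $i=j$ case. Both your proof and the paper's rely on the stated antipodal identity $\fs_d(x)=1-\fs_d(1-x)$, so you are on equal footing there.
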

  \begin{proof}
    \*[
      \EmpRisk{\bar S}{\fh{d}}
        & = \frac{1}{\sgcSize{d}} \sum_{i=1}^{\sgcSize{d}} \ind{\bar X_i \in \bar S_X}\ind{\bar X_i \not\in S_X}
        =\frac{1}{\sgcSize{d}} \sum_{i=1}^{\sgcSize{d}} \ind{\bar X_i \not\in S_X}
        \geq \ind{E^c} \frac{1}{\sgcSize{d}}\sum_{i=1}^{\sgcSize{d}} 1
         = \ind{E^c}
    \]
  \end{proof}
  Now, returning to the proof of \cref{thm:not-SGC}
  \*[
    \EE \sup_{h\in \HS} \abs{\Risk{\Dist}{h} - \EmpRisk{S}{h}}
      & \geq  \EE \abs{\Risk{\Dist}{\fh{d}^{(\bar S)}} - \EmpRisk{S}{\fh{d}^{(\bar S)}}}
      \geq \EE \EmpRisk{S}{\fh{d}^{(\bar S)}} - \EE \Risk{\Dist}{\fh{d}^{(\bar S)}} \\
      & \geq 1 - \rbra{\binom{\sgcSize{d}}{2}+\sgcSize{d}} 2^{-2 d }
      = 1 - \binom{\sgcSize{d}+1}{2} 2^{-2 d }
  \]
\end{proof}

\begin{proof}[Proof of \cref{lem:Q-risk}]

For each $v\in \set{0,1}^{k_d}$ let $g_{k_d}(v,x) = (v_1,\dots v_{k_d},x_{k_d+1},\dots x_{2 d })$, and let $G_{k_d}(V,S_X) = (g_{k_d}(v_1,X_1),\dots ,g_{k_d}(v_{k_d},X_{\sgcSize{d}}))$

Starting with $L_S(Q)$.
\*[
  L_S(Q)
    & = \frac{1}{n}\sum_{i=1}^n (2^{-k})^n\sum_{V\in (\set{0,1}^k)^n} \ind{X_i \in \overline{G_k(V,S_X)}}\ind{X_i \not\in G_k(V,S_X)}\\
    & \leq  \frac{2^{-n k}}{n}\sum_{i=1}^n\sum_{V\in \set{0,1}^{n k}} \ind{X_i \in \overline{G_k(V,S_X)}} \ind{X_i[1:k]\neq v_i} .
  \]
  Now, $X_i$ is in $\overline{G_k(V,S_X)}$ only if $X_i[1:k]$ is antipodal to at least one $v$, so
\*[
\frac{2^{-n k}}{n}\sum_{i=1}^n\sum_{V\in \set{0,1}^{n k}} \ind{X_i \in \overline{G_k(V,S_X)}} \ind{X_i[1:k]\neq v_i}
& \leq  \frac{2^{-n k}}{n}\sum_{i=1}^n \sum_{V\in \set{0,1}^{n k}} \sum_{j=1}^n \ind{X_i[1:k] = \ol{v_j} \neq v_i}\ind{X_i[k+1 : 2 d ] = \ol{X_j[k+1:2 d ]}} \\
& =  \frac{2^{-n k}}{n}\sum_{i=1}^n \sum_{j=1}^n \ind{X_i[k+1 : 2 d ] = \ol{X_j[k+1:2 d ]}} \sum_{V\in \set{0,1}^{n k}} \ind{X_i[1:k] = \ol{v_j} \neq v_i}.
\]
  The inner most summand is constant in all but the $i$th and $j$th $v$, so
  \*[
  & \hspace{-2em}\frac{2^{-n k}}{n}\sum_{i=1}^n \sum_{j=1}^n \ind{X_i[k+1 : 2 d ] = \ol{X_j[k+1:2 d ]}} \sum_{V\in \set{0,1}^{n k}} \ind{X_i[1:k] = \ol{v_j} \neq v_i}\\
    & \leq  \frac{2^{-n k}}{n}\sum_{i=1}^n \sum_{j=1}^n \ind{X_i[k+1 : 2 d ] = \ol{X_j[k+1:2 d ]}}  \sum_{(v_i,v_j)\in \set{0,1}^{2 k}} 2^{k(n-2)} \ind{X_i[1:k] = \ol{v_j} \neq v_i} \\
    & =  \frac{2^{-2 k}}{n}\sum_{i=1}^n \sum_{j=1}^n \ind{X_i[k+1 : 2 d ] = \ol{X_j[k+1:2 d ]}}  \sum_{(v_i,v_j)\in \set{0,1}^{2 k}} \ind{X_i[1:k] = \ol{v_j} \neq v_i} \\
    & =  \frac{2^{-2 k}}{n}\sum_{i=1}^n \sum_{j=1}^n \ind{X_i[k+1 : 2 d ] = \ol{X_j[k+1:2 d ]}}  \sum_{v_j\in \set{0,1}^{k}} (2^k-1) \ind{X_i[1:k] = \ol{v_j}} \\
    & =  \frac{2^{-2 k}}{n}\sum_{i=1}^n \sum_{j=1}^n (2^k-1) \ind{X_i[k+1 : 2 d ] = \ol{X_j[k+1:2 d ]}}  \\
    & = \frac{2^{-k}(1- 2^{-k})}{n} \card{\set{(i,j)\in\range{n}^2 : X_i[k+1 : 2 d ] = \ol{X_j[k+1:2 d ]}}} .
\]
Thus
\*[
  L_S(Q) \leq \frac{2^{-k}(1- 2^{-k})}{n} \card{\set{(i,j)\in\range{n}^2 : X_i[k+1 : 2 d ] = \ol{X_j[k+1:2 d ]}}}.
\]
Taking expectations, we get
\*[
  \EE L_S(Q) \leq \frac{2^{-k}(1- 2^{-k})}{n} n(n-1)2^{-(2 d -k)} = (n-1)2^{-2 d } (1-2^{-k}).
\]

Next, Looking at $L_{\ol S}(Q)$,
\*[
  L_{\ol S}(Q)
    & = \frac{1}{n}\sum_{i=1}^n (2^{-k})^n\sum_{V\in (\set{0,1}^k)^n} \ind{\ol X_i \in \overline{G_k(V, S_X)}}\ind{\ol X_i \not\in G_k(V, S_X)}\\
    & \leq  \frac{2^{-n k}}{n}\sum_{i=1}^n\sum_{V\in \set{0,1}^{n k}} \ind{\ol X_i \in \overline{G_k(V,S_X)}}.
  \]
  Now, $\ol X_i$ is in $\overline{G_k(V,S_X)}$ only if $\ol {X_i[1:k]}$ is antipodal to at least one $v$, so
\*[
\frac{2^{-n k}}{n}\sum_{i=1}^n\sum_{V\in \set{0,1}^{n k}} \ind{\ol X_i \in \overline{G_k(V,S_X)}}
& \leq  \frac{2^{-n k}}{n}\sum_{i=1}^n \sum_{V\in \set{0,1}^{n k}} \sum_{j=1}^n \ind{\ol{X_i[1:k]} = \ol{v_j}}\ind{\ol{X_i[k+1 : 2 d ]} = \ol{X_j[k+1:2 d ]}} \\
& =  \frac{2^{-n k}}{n}\sum_{i=1}^n \sum_{j=1}^n \ind{\ol{X_i[k+1 : 2 d ]} = \ol{X_j[k+1:2 d ]}} \sum_{V\in \set{0,1}^{n k}} \ind{\ol{X_i[1:k]} = \ol{v_j}} \\
& =  \frac{2^{-n k}}{n}\sum_{i=1}^n \sum_{j=1}^n \ind{{X_i[k+1 : 2 d ]} = {X_j[k+1:2 d ]}} \sum_{V\in \set{0,1}^{n k}} \ind{{X_i[1:k]} = {v_j}} \\
.
\]
  The inner most summand is constant in all but the $j$th $v$, so
  \*[
  & \hspace{-2em}\frac{2^{-n k}}{n}\sum_{i=1}^n \sum_{j=1}^n \ind{X_i[k+1 : 2 d ] = {X_j[k+1:2 d ]}} \sum_{V\in \set{0,1}^{n k}} \ind{X_i[1:k] = {v_j}}\\
    & \leq  \frac{2^{-n k}}{n}\sum_{i=1}^n \sum_{j=1}^n \ind{X_i[k+1 : 2 d ] = {X_j[k+1:2 d ]}}  \sum_{v_j\in \set{0,1}^{k}} 2^{k(n-1)} \ind{X_i[1:k] = {v_j}} \\
    & =  \frac{2^{-k}}{n}\sum_{i=1}^n \sum_{j=1}^n \ind{X_i[k+1 : 2 d ] = {X_j[k+1:2 d ]}}  \sum_{v_j\in \set{0,1}^{k}} \ind{X_i[1:k] = {v_j}} \\
    & =  \frac{2^{-k}}{n}\sum_{i=1}^n \sum_{j=1}^n \ind{X_i[k+1 : 2 d ] = {X_j[k+1:2 d ]}}  \\
    & = \frac{2^{-k}}{n} \card{\set{(i,j)\in\range{n}^2 : X_i[k+1 : 2 d ] = {X_j[k+1:2 d ]}}} .
\]
Thus
\*[
  L_{\ol S} (Q) \leq \frac{2^{-k}}{n} \card{\set{(i,j)\in\range{n}^2 : X_i[k+1 : 2 d ] = {X_j[k+1:2 d ]}}}.
\]
Taking expectations, we get
\*[
  \EE L_{\ol S(Q)} \leq \frac{2^{-k}}{n} (n+n(n-1)2^{-(2 d -k)}) = 2^{-k} + (n-1)2^{-2 d }.
\]

Lastly, for $\Risk{\Dist}{Q}$,

  \*[
    \Risk{\Dist}{Q}
      & = \EE \Risk{\Dist}{\fh{d}}
       \leq n 2^{-2 d }
  \]
\end{proof}

\begin{proof}[Proof of \cref{lem:Q-sgc}]
First, using a standard symmetrization argument, we bound the supremum over the empirical process of Curryed losses by twice the Rademacher complexity,
  \s*[
    \EE \sbra{\sup_{S'} \abs{\Risk{\Dist}{Q(S')} -\Risk{S}{Q(S')}}}
      & \leq 2\ R(A)
  \s]
  Where
  \s*[A = \set{(\loss'(Z_i,Q(\fh{d}^{(S')})))_{i\in\range{\sgcSize{d}}} : S' \in (\sgcInstances{d})^{\sgcSize{d}}, \loss'\in\set{\loss,-\loss}}. \s]
  and $R(A)$ denotes the Rademacher complexity of $A$.
  Then, by Massart's Lemma,
  \*[
    R(A)
      & \leq \max_{a\in A}{\norm{a}} \frac{\sqrt{\log(2\card{A})}}{\sgcSize{d}} \\
      & \leq \max_{a\in A}{\norm{a}} \frac{\sqrt{((2 d -k_d)\sgcSize{d}+1)\log(2)}}{\sgcSize{d}} .
  \]
  Next, by the same arguments as in the proof of \cref{lem:Q-risk}, $\loss'(Z_i,Q(\fh{d}^{(S')})) \leq \sgcSize{d} 2^{-k_d}$ for all $i$ and for all datasets $S'$.
  Therefore, for all $a\in A$, $\norm{a}\leq \sgcSize{d}^{1/2} \sgcSize{d} 2^{-k_d} = \sgcSize{d}^{3/2} 2^{-k_d}$.

  Hence $R(A)\leq \sqrt{\log(2)}\sgcSize{d}^{1/2}((2 d -k_d)\sgcSize{d}+1)^{1/2}2^{-k_d}$.
\end{proof}

\begin{proof}[Proof of \cref{thm:example1-main-bound}]
  From \cref{surrogatecond,supbound,lem:Q-risk},
  \*[\label{eq:spec-risk-bound1}
    & \EE[\Risk{\Dist}{\fh{d}} - \EmpRisk{S}{\fh{d}}] \\
      & \quad \le \EE[ \abs{\EmpRisk{S}{Q} - \EmpRisk{S}{\fh{d}}} ]\\
      &\qquad\quad +  \,\Pr[Q \not\in \sgcSHS{d}] + \EE \bigg[ \sup_{P \in \sgcSHS{d}} { \Risk{\Dist}{P} - \EmpRisk{S}{P} } \bigg ] \\
      & \quad \le (\sgcSize{d}-1)2^{-2 d } (1-2^{-k_d})+ 0 + 0 \\
      &\qquad\quad + \EE \bigg[ \sup_{P \in \sgcSHS{d}} { \Risk{\Dist}{P} - \EmpRisk{S}{P} } \bigg ] .
  \]
  The last term is controlled using \cref{lem:Q-sgc} to get
  \[\label{eq:spec-risk-bound2}
    & \EE[\Risk{\Dist}{\fh{d}} - \EmpRisk{S}{\fh{d}}] \\
      &\quad \leq (\sgcSize{d}-1)2^{-2 d } (1-2^{-k_d}) \\
      &\qquad\quad +  2\sqrt{\log(2)}\sgcSize{d}^{1/2}((2 d -k_d)\sgcSize{d}+1)^{1/2}2^{-k_d}\\
      &\quad \leq  (\sgcSize{d}-1)2^{-2 d }\\
      &\qquad\quad + 2\sqrt{\log(2)}\sgcSize{d}^{1/2}((2 d -k_d)\sgcSize{d}+1)^{1/2}2^{-k_d} .
  \]
\end{proof}

\section{Proofs for Overparameterized Linear Regression}
\label{apx:Proofs2}

\begin{proof}[Proof of \cref{lem:lr-failureUC}]
Let $\phi((x,y)) = (x,2 x\beta -y)$. Then $(x,y)$ and $(\phi(x,y))$ are equally probable, and $\phi$ is its own inverse function. Hence $\phi$ is measure preserving. Moreover $L_S(\hat\beta(\phi(X,Y))) = \frac{4}{n}\norm{Z}^2$. Hence for any sequence of sets $A_n\subset \dataspace^n$
\*[
  & \EE \sup_{(\tilde X,\tilde Y)\in A_n} \abs{L_D(\hat\beta(\tilde X,\tilde Y)) -L_S(\hat\beta(\tilde X,\tilde Y))}\\
  & \quad \geq \EE \one_{\phi(X,Y)\in A_n} \abs{L_D(\hat\beta(\phi(X,Y))) -L_S(\hat\beta(\phi(X,Y)))}\\
  & \quad \geq \EE \one_{\phi(X,Y)\in A} \max\rbra{0, 4\frac{\norm{Z}^2}{n} - L_D(\hat\beta(\phi(X,Y)))}\\
\]

We can couple the spaces for different values of $n$ in any way we choose since no terms in the statement involve multiple probability spaces at once. Hence we can do so in a way that $\Pr(\set{\forall n\in\Nats:\ \phi(X,Y)\in A_n}) \geq 2/3$.

Using \citet[Theorem 4]{bartlett2019benign} and the weak law of large numbers, we have that $L_D(\hat\beta(\phi(X,Y))\stk\to{P}\sigma^2$ and  $4\frac{\norm{Z}^2}{n}\stk\to{P} 4\sigma^2$ when $\Sigma_n$ is benign. Then there is a subsequence along which this convergence is almost sure. Along this subsequence, \[\max\rbra{0, 4\frac{\norm{Z}^2}{n_k} - L_D(\hat\beta(\phi(X,Y)))} \stk\to{a.s.} 3\sigma^2.\]
Then by Fatou's lemma along the subsequence
\*[
  & \liminf_{k\to\infty}\ \EE \sup_{(\tilde X,\tilde Y)\in A_{n_k}} \abs{L_D(\hat\beta(\tilde X,\tilde Y)) -L_S(\hat\beta(\tilde X,\tilde Y))} \\
  &\quad \geq \EE \liminf_{k\to\infty}\one_{\phi(X,Y)\in A} 3\sigma^2\\
  &\quad \geq 2\sigma^2
\]
Thus there is a sub-subsequence above $2\sigma^2-\epsilon$ infinitely often for each $\epsilon>0$, and hence
\s*[
  \limsup_{n\to\infty}\ \EE \sup_{(\tilde X,\tilde Y)\in A_n} \abs{L_D(\hat\beta(\tilde X,\tilde Y)) -L_S(\hat\beta(\tilde X,\tilde Y))}
  &\geq 2\sigma^2.
\s]
\end{proof}

\begin{proof}[Proof of \cref{lem:LR-sur-decomp}]
  Let $\hat \beta_0(X_0)=(X_0'X_0)^+X_0'X_0\beta =P(X_0)\beta$. This corresponds to the classifier that solves the learning problem without label noise if the training design matrix was $X_0$. Let $\hat \beta_0 =\hat\beta_0(X)$ (if no argument is specified then it is the ``learned'' version ).

  Let the projection onto the row-span of a matrix $A$ be given by $P(A) = (A'A)^+ A'A  =  A'A (A'A)^+ = A' (AA')^+ A$.

  If $d>n$ then (a.s.) $X$ is of rank $n$ so $P(X)$ is a rank $n$ projection on $\Reals^d$ and $P(X')$ is a rank $n$ projection on $\Reals^n$ --- so $P(X')=I$ a.s.

  The surrogate decomposition (not in expectation) gives us
  \[
    L_D(\hat \beta ) - L_S(\hat \beta)
      & = L_S(\hat \beta_0)-L_S(\hat\beta) + (L_D(\hat \beta) - L_D(\hat \beta_0)) + (L_D(\hat \beta_0) - L_S(\hat \beta_0)).
  \]

Then,
  \[
    L_S(\hat \beta_0)
      & = \frac{1}{n}\norm{X(X'X)^+X'X\beta_n - (X\beta_n+Z)}^2 \\
      & = \frac{1}{n}\norm{-Z +(P(X')-I)X\beta_n}^2 \\
      & = \frac{1}{n}\norm{Z}^2.
  \]

Next, for any $X_0$,
  \[ \label{eq:lr-pf-surr-class}
    L_D(\hat \beta_0(X_0)) - L_S(\hat \beta_0(X_0))
      & = \EE_{x,z}\norm{x P(X_0)\beta_n - (x\beta_n+z)}^2
            - \frac{1}{n}\norm{X P(X_0)\beta_n - (X\beta_n+Z)}^2\\
      & = \EE_{x,z}\norm{x P(X)\beta_n - (x\beta_n+z)}^2
            - \frac{1}{n}\norm{X P(X_0)\beta_n - (X\beta_n+Z)}^2\\
      & = \EE_{x,z}\norm{x P(X_0)\beta_n - x\beta_n}^2 +\sigma^2 \\
            &\qquad - \frac{1}{n}\norm{X P(X_0)\beta_n - X\beta_n}^2 +\frac{2}{n} Z'(X (I-P(X_0))\beta_n) -\frac{1}{n}\norm{Z}^2
            \\
      & =\sigma^2-\frac{1}{n}\norm{Z}^2+\frac{2}{n} Z'(X (I-P(X_0))\beta_n) \\
      &\qquad + \EE_{x,z}\norm{x[P(X_0) - I]\beta_n}^2
            - \frac{1}{n}\norm{X[ P(X_0)-I]\beta_n}^2\\
      & = \sigma^2-\frac{1}{n}\norm{Z}^2+\frac{2}{n} Z'(X (I-P(X_0))\beta_n)\\
        & \qquad + \beta_n' \sbra{[ P(X_0)-I] \sbra{\Sigma_n - \frac{1}{n}X'X} [ P(X_0)-I] } \beta_n .
  \]
Since $X P(X) = X$, then when $X_0=X$, this simplifies to
\[
  L_D(\hat \beta_0(X)) - L_S(\hat \beta_0(X))
    & = \sigma^2-\frac{1}{n}\norm{Z}^2+\\
      & \qquad + \beta_n' \sbra{[ P(X)-I] \Sigma_n [ P(X)-I] } \beta_n .
\]

Lastly,
\[
    L_D(\hat \beta) - L_D(\hat \beta_0)
    & = \EE_{x,z}\rbra{x(X'X)^+X'(X\beta_n+Z) - (x\beta_n+z)}^2
          - \EE_{x,z}\rbra{x(X'X)^+X'X\beta_n - (x\beta_n+z)}^2\\
    & = \EE_{x}\rbra{x(X'X)^+X'(X\beta_n+Z) - x\beta_n}^2
          - \EE_{x}\rbra{x(X'X)^+X'X\beta_n - x\beta_n}^2\\
    & = \EE_{x}\rbra{x(X'X)^+X'Z}^2\\
    & = \EE_{x} (Z'X(X'X)^+x'x(X'X)^+X'Z)\\
    & = Z'X(X'X)^+\Sigma(X'X)^+X'Z\\
    & = \tr(X(X'X)^+\Sigma(X'X)^+X'Z Z')\\
\]
\end{proof}

\begin{proof}[Proof of \cref{lem:lr-sgcsurr}]
  We want to bound
  \[
    &\EE\sup_{X_0} \abs{L_D(\hat \beta_0(X_0)) - L_S(\hat \beta_0(X_0))}\\
      & = \EE \sup_{X_0}\abs{\sigma^2-\frac{1}{n}\norm{Z}^2+\frac{2}{n} Z'(X (I-P(X_0))\beta_n)+\beta_n' \sbra{[ P(X_0)-I] \sbra{\Sigma_n - \frac{1}{n}X'X} [ P(X_0)-I] } \beta_n} \\
      &\leq \EE\abs{\sigma^2-\frac{1}{n}\norm{Z}^2} +\EE \sup_{X_0}\abs{\frac{2}{n} Z'(X (I-P(X_0))\beta_n)}+\EE\sup_{X_0}\abs{\beta_n' \sbra{[ P(X_0)-I] \sbra{\Sigma_n - \frac{1}{n}X'X} [ P(X_0)-I] } \beta_n} \\
  \]
We will handle each of the three terms separately.

First
\[
  \EE\abs{\sigma^2-\frac{1}{n}\norm{Z}^2}
    & \leq \sqrt{\EE\sbra{\sigma^2-\frac{1}{n}\norm{Z}^2}^2} \\
    & = \sqrt{\EE\sbra{\sigma^4 - \frac{2\sigma^2}{n}\norm{Z}^2+\frac{1}{n^2}\sum_{i\in\range{n}}\sum_{j\in\range{n}} Z_i^2 Z_j^2}}\\
    & = \sqrt{\sigma^4 - 2\sigma^4+\frac{n(n-1)}{n^2}\sigma^4 + 3\frac{n}{n^2}\sigma^4}\\
    & = \sigma^2\sqrt{\frac{2}{n}} .
\]

Next, for some universal constants $C_2,C_3>0$,
the second term is bounded by $C_2\frac{ \sigma\norm{\beta_n}\norm{\Sigma_n}^{1/2} \sqrt{r_0(\Sigma_n)}}{\sqrt{n}}$ in \cref{lem:lr-second-term-uc}
and the third term is bounded by $C_3 \norm{\beta_n}^2\norm{\Sigma_n} \max\rbra{\sqrt{\frac{r_0(\Sigma_n)}{n}}, {\frac{r_0(\Sigma_n)}{n}}}$ in \cref{lem:lr-third-term-uc}.

Putting these all together yields the desired result.
\end{proof}

\begin{lemma}[Third Term]
\label{lem:lr-third-term-uc}
For some universal constant $C_3>0$
\[
&\hspace{-1em}
 \EE\sup_{X_0\in\Reals^{n\times d}} \abs{\beta_n' [P(X_0)-I] \sbra{\Sigma_n - \frac{1}{n}X'X} [P(X_0)-I]  \beta_n} \\
      & =   \EE\sup_{P\in\OrthProjs(d,d-n)} \abs{\beta_n'P \sbra{\Sigma_n - \frac{1}{n}X'X} P \beta_n} \\
      & \leq C_3 \norm{\beta_n}^2\norm{\Sigma_n} \max\rbra{\sqrt{\frac{r_0(\Sigma_n)}{n}}, {\frac{r_0(\Sigma_n)}{n}}}
\]
where $\OrthProjs(d,k)$ is the collection of orthogonal projections on $\Reals^d$ of rank $k$.
\end{lemma}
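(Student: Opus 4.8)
The plan is to use the projection structure to collapse the supremum onto the operator norm of the centered sample covariance matrix $M:=\Sigma_n-\tfrac1n X'X$, and then invoke the Koltchinskii--Lounici concentration bound for sample covariance operators. First I would record the rewriting in the display: for $X_0\in\Reals^{n\times d}$, $P(X_0)$ is the orthogonal projection of $\Reals^d$ onto the row space of $X_0$, so $I-P(X_0)$ is an orthogonal projection of rank at least $d-n$, and conversely every $P\in\OrthProjs(d,d-n)$ arises this way (take $X_0$ whose rows span the orthogonal complement of the range of $P$). Since $\beta_n'PMP\beta_n=\inner{P\beta_n}{MP\beta_n}$ depends on $P$ only through $v=P\beta_n$, and the set of achievable $v$ (those with $\inner{v}{\beta_n-v}=0$) is the same for projections of any rank in $\{1,\dots,d-1\}$, the supremum over $X_0$ equals the supremum over $\OrthProjs(d,d-n)$. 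For the bound itself only the ``$\le$'' direction is needed, so it suffices to control $|\beta_n'PMP\beta_n|$ uniformly over \emph{all} orthogonal projections $P$.

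The key elementary estimate is that $P$ is a contraction and $M$ is symmetric, so, pointwise in the design $X$, Cauchy--Schwarz gives $|\beta_n'PMP\beta_n|=|\inner{P\beta_n}{MP\beta_n}|\le\norm{P\beta_n}\,\norm{MP\beta_n}\le\norm{M}\,\norm{P\beta_n}^2\le\norm{M}\,\norm{\beta_n}^2$ for every orthogonal projection $P$. Taking the supremum over $X_0$ and then expectation over $X$ yields
\[
\EE\sup_{X_0\in\Reals^{n\times d}}\abs{\beta_n'[P(X_0)-I]\sbra{\Sigma_n-\tfrac1n X'X}[P(X_0)-I]\beta_n}\le\norm{\beta_n}^2\,\EE\norm{\Sigma_n-\tfrac1n X'X}.
\]

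It then remains to bound $\EE\norm{\Sigma_n-\tfrac1n X'X}$. Since $X$ has i.i.d.\ rows $\distiid\normaldist(0,\Sigma_n)$, the matrix $\tfrac1n X'X$ is exactly the sample covariance operator, and the concentration result of \citet{koltchinskii2014concentration} gives, for a universal constant $C>0$,
\[
\EE\norm{\Sigma_n-\tfrac1n X'X}\le C\norm{\Sigma_n}\rbra{\sqrt{\tfrac{r_0(\Sigma_n)}{n}}+\tfrac{r_0(\Sigma_n)}{n}}\le 2C\norm{\Sigma_n}\max\rbra{\sqrt{\tfrac{r_0(\Sigma_n)}{n}},\,\tfrac{r_0(\Sigma_n)}{n}},
\]
where we used that the effective rank $\tr(\Sigma_n)/\norm{\Sigma_n}$ is precisely $r_0(\Sigma_n)$. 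Combining with the previous display and setting $C_3=2C$ gives the claim.

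The reduction is a one-line contraction argument, so the only real content---and the only place anything subtle happens---is the operator-norm concentration of the sample covariance, which is exactly the deep ingredient (due to Koltchinskii and Lounici) that \citet{bartlett2019benign} rely on throughout; the remaining care is bookkeeping, chiefly the orientation of $X$ as $n\times d$ and the verification that $\sup_{X_0}$ collapses to $\sup_{\OrthProjs(d,d-n)}$ (which is needed only for the displayed identity, not for the inequality actually used in \cref{lem:lr-sgcsurr}).
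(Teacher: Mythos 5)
Your proof is correct and takes essentially the same route as the paper: the paper simply cites \citet[Theorem~4]{koltchinskii2014concentration} for $\EE\norm{\Sigma_n - \tfrac1n X'X}$, leaving the contraction step $\abs{\beta_n'PMP\beta_n}\le\norm{\beta_n}^2\norm{M}$ and the identification of $r_0(\Sigma_n)$ with the effective rank implicit, which you spell out explicitly (along with the reduction of $\sup_{X_0}$ to projections, which the paper also glosses over).
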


This follows directly from \citet[Theorem~4.]{koltchinskii2014concentration}.

\begin{lemma}[Second Term]
\label{lem:lr-second-term-uc}
For some universal constant $C_2>0$
\[
  \EE\sup_{X_0\in\Reals^{n\times d}} \abs{\frac{2}{n} Z' X P(X_0)\beta_n}
      & =   \EE\sup_{P\in\OrthProjs(d,d-n)} \abs{\frac{2}{n} Z' X P\beta_n}
      \leq C_2\frac{2 \sigma\norm{\beta_n}\norm{\Sigma_n}^{1/2}\sqrt{r_0(\Sigma_n)}}{\sqrt{n}}
\]
where $\OrthProjs(d,k)$ is the collection of orthogonal projections on $\Reals^d$ of rank $k$.
\end{lemma}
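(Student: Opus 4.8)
The plan is to discard the supremum over projections with a one-line Cauchy--Schwarz estimate and then bound the remaining random quantity in $L^2$. First, record the distributional facts: since $(Y_i\mid X)\distind\normaldist(X_i\beta_n,\sigma^2)$, the residual vector $Z = Y - X\beta_n$ is independent of $X$ with $Z\dist\normaldist(0,\sigma^2\IMat[n])$, so $\EE[ZZ'\mid X] = \sigma^2\IMat[n]$.

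The key observation is that the supremum over projections costs nothing. For \emph{any} orthogonal projection $P$ on $\Reals^d$ we have $Z'XP\beta_n = \inner{X'Z}{P\beta_n} = \inner{PX'Z}{\beta_n}$ (using $P = P^{\intercal} = P^2$), so by Cauchy--Schwarz and $\norm{P}\le 1$, $\abs{Z'XP\beta_n}\le\norm{PX'Z}\,\norm{\beta_n}\le\norm{X'Z}\,\norm{\beta_n}$, a bound that does not depend on $P$. Hence $\sup_P\abs{\tfrac{2}{n} Z'XP\beta_n}\le\tfrac{2}{n}\norm{\beta_n}\,\norm{X'Z}$; since this works for the supremum over \emph{all} orthogonal projections, it is immaterial whether one ranges over $\OrthProjs(d,d-n)$ or over $\set{\IMat - P(X_0):X_0\in\Reals^{n\times d}}$.

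It remains to bound $\EE\norm{X'Z}$, which I would do in $L^2$: by Jensen, $\EE\norm{X'Z}\le(\EE\norm{X'Z}^2)^{1/2}$, and conditioning on $X$ gives $\EE[\norm{X'Z}^2\mid X] = \sigma^2\tr(XX') = \sigma^2\norm{X}_F^2$, so $\EE\norm{X'Z}^2 = \sigma^2\sum_{i=1}^n\EE\norm{X_i}^2 = \sigma^2 n\tr(\Sigma_n)$. Therefore $\EE\sup_P\abs{\tfrac{2}{n} Z'XP\beta_n}\le\tfrac{2}{n}\norm{\beta_n}\sqrt{\sigma^2 n\tr(\Sigma_n)} = 2\sigma\norm{\beta_n}\sqrt{\tr(\Sigma_n)/n}$. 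Finally, $r_0(\Sigma_n) = \tr(\Sigma_n)/\lambda_1(\Sigma_n) = \tr(\Sigma_n)/\norm{\Sigma_n}$ by definition, so $\sqrt{\tr(\Sigma_n)} = \norm{\Sigma_n}^{1/2}\sqrt{r_0(\Sigma_n)}$, which is exactly the asserted bound (one may take $C_2 = 1$).

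There is no real obstacle. The only point worth noting is that although Cauchy--Schwarz appears wasteful --- all structure of the projections is thrown away --- it is essentially lossless here, because the target $\norm{\Sigma_n}^{1/2}\sqrt{r_0(\Sigma_n)}$ is precisely $\sqrt{\tr(\Sigma_n)}$, i.e.\ the $L^2$ magnitude of $\norm{X_i}$ up to the $\sqrt{n}$ scaling; so, unlike the third-term lemma, no concentration or chaining over the set of rank-$(d-n)$ projections is required.
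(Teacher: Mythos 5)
Your argument is correct and is essentially the paper's own proof: the paper likewise discards the projection by relaxing to $\sup_{\norm{\gamma}\le\norm{\beta_n}}\abs{\tfrac{2}{n}Z'X\gamma}=\tfrac{2}{n}\norm{\beta_n}\norm{X'Z}$ (the dual form of your Cauchy--Schwarz step), then bounds $\EE\norm{X'Z}$ via Jensen and $\EE\norm{X'Z}^2=\sigma^2 n\tr(\Sigma_n)$, and rewrites $\tr(\Sigma_n)=\norm{\Sigma_n}r_0(\Sigma_n)$. Your final display even fixes a small typo in the paper's last line, which writes $\norm{\Sigma_n}$ where $\norm{\Sigma_n}^{1/2}$ (as in the lemma statement and in your derivation) is what the computation gives.
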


\begin{proof}[Proof of \cref{lem:lr-second-term-uc}]
  \[
    \sup_{P\in\OrthProjs(d,d-n)} \abs{\frac{2}{n} Z' X P\beta_n}
      & \leq \sup_{\norm{\gamma}\leq \norm{\beta_n}} \abs{\frac{2}{n} Z' X \gamma} \\
      & = \abs{\frac{2}{n} Z' X \frac{X' Z}{\norm{X'Z}} \norm{\beta_n}}\\
        & = \frac{2 \norm{\beta_n}}{n} \norm{X'Z}.
  \]
Now,
\[
  \EE \sup_{P\in\OrthProjs(d,d-n)} \abs{\frac{2}{n} Z' X P\beta_n}
    & \leq \frac{2 \norm{\beta_n}}{n} \EE \norm{X'Z} \\
    & \leq \frac{2 \norm{\beta_n}}{n} \sqrt{\EE Z' X X' Z}\\
    & = \frac{2 \norm{\beta_n}}{n} \sqrt{\EE \tr( X' ZZ' X)}\\
    & = \frac{2 \norm{\beta_n}}{n} \sqrt{\EE \tr( X' \sigma^2 I X)}\\
    & = \frac{2 \norm{\beta_n}\sigma}{n} \sqrt{\EE \tr(XX')}\\
    & = \frac{2 \norm{\beta_n}\sigma}{n} \sqrt{n \tr(\Sigma_n)}\\
    & = \frac{2 \sigma\norm{\beta_n}\sqrt{\tr(\Sigma_n)}}{\sqrt{n}}\\
    & \leq \frac{2 \sigma\norm{\beta_n}\norm{\Sigma_n}\sqrt{r_0(\Sigma_n)}}{\sqrt{n}}
\]

\end{proof}

\begin{proof}[Proof of \cref{thm:lr-exp-risk-bd}]
  We only need bounds on $\EE(L_S(\hat\beta_0)-L_S(\hat\beta))$ and $\EE(L_D(\hat\beta)-L_D(\hat\beta_0))$ to combine with  \cref{lem:LR-sur-decomp} and \cref{lem:lr-sgcsurr}.

  First,
  \[
  \EE(L_S(\hat\beta_0)-L_S(\hat\beta)) = \EE \frac{\norm{Z}^2}{n} = \sigma^2
  \]

  Second, \citet{bartlett2019benign} shows that there are universal constant, $c,b>0$, such that for all $\delta <1$, with probability at least $(1-\delta)$,
  \[
  L_D(\hat \beta) - L_D(\hat \beta_0)
    & = Z'X(X'X)^+\Sigma(X'X)^+X'Z \\
    & \leq c \sigma^2\log(1/\delta)\rbra{\frac{k^*}{n} + \frac{n}{R_{k^*}(\Sigma_n)}} ,
  \]
  where $k_* = \min\set{k\geq 0: r_k(\Sigma_n)\geq b n}$, $r_k(\Sigma_n) = \frac{\sum_{i>k} \lambda_i(\Sigma_n)}{\lambda_{k+1}(\Sigma_n)}$ and $R_k(\Sigma_n) = \frac{\rbra{\sum_{i>k}\lambda_i(\Sigma_n)}^2}{\sum_{i>k}\lambda_i^2(\Sigma_n)}$.

  We can turn this into a bound in expectation by integrating the tail.
  \[
    \frac{\EE L_D(\hat \beta) - L_D(\hat \beta_0) }{c\sigma^2 \rbra{\frac{k^*}{n} + \frac{n}{R_{k^*}(\Sigma_n)}}}
      & = \int_0^\infty \Pr\rbra{\frac{L_D(\hat \beta) - L_D(\hat \beta_0) }{c \sigma^2\rbra{\frac{k^*}{n} + \frac{n}{R_{k^*}(\Sigma_n)}}} >t} d t \\
      & = \int_0^\infty e^{-t} d t \\
      & = 1 .
  \]
  Thus, for the same universal constants, $c,b$,
  \[
  \EE L_D(\hat \beta) - L_D(\hat \beta_0) \leq c \sigma^2\rbra{\frac{k^*}{n} + \frac{n}{R_{k^*}(\Sigma_n)}} .
  \]
\end{proof}

\end{document}